\newrobustcmd\B{\DeclareFontSeriesDefault[rm]{bf}{b}\bfseries}
\newcommand\mcc[1]{\multicolumn{1}{c}{#1}}
\definecolor{red}{rgb}{0.64705882, 0., 0.14901961}
\definecolor{blue}{rgb}{0.19215686, 0.21176471, 0.58431373}
\newtheorem{thm}{Theorem}
\newtheorem{Assumption}{Assumption}
\newtheorem{Lemma}{Lemma}
\newtheorem{Corollary}{Corollary}
\newtheorem{Definition}{Definition}
\newtheorem{Remark}{Remark}
\def\I{{\mathbb I}}
\def\P{{\mathbb P}}
\def\pr{\mathbb{P}}
\def\X{{\mathbf{X}}}
\def\x{{\mathbf{x}}}
\def\s{{\mathbf{s}}}
\def\y{{\mathbf{y}}}
\def\Y{{\mathbf{Y}}}
\def\E{{\mathbb E}}
\def\PIT{{\rm PIT}}
\let\hat\widehat
\renewcommand{\S}{\mathbf{S}}
\newcommand{\pz}{photo-$z$\xspace}
\newcommand{\pzs}{photo-$z$'s\xspace}
\newcommand{\calpit}{\texttt{Cal-PIT}\xspace}
\newcommand{\codecomment}[1]{\textbf{\color{black}// #1}}
\definecolor{awesome}{rgb}{1.0, 0.13, 0.32}
\definecolor{safetyorange}{rgb}{1.0, 0.4, 0.0}
\definecolor{vermilion}{rgb}{0.89, 0.26, 0.2}
\definecolor{aqua}{rgb}{0.0, 0.9, 0.9}
\definecolor{darkorchid}{rgb}{0.6, 0.2, 0.8}
\definecolor{darkblue}{rgb}{0.0, 0.0, 0.55}
\newcommand{\xrightarrow}[2][]{%
  \begin{tikzcd}[cramped, ampersand replacement=\&, outer sep=0pt]{}\arrow[r, "#2","#1"']\&{}\end{tikzcd}%
}
\begin{document}
\title[Local Amortized Diagnostics and Reshaping of Conditional Densities]{Towards Instance-Wise Calibration: Local Amortized  Diagnostics and Reshaping of Conditional Densities (LADaR)}

\author{Biprateep Dey$^{1,2,3,4,5,6}$, David Zhao$^7$, Brett H. Andrews$^{1,2}$, Jeffrey A. Newman$^{1,2}$, Rafael Izbicki$^8$ and Ann B. Lee$^7$\footnote[1]{{\em Corresponding author:} Ann B. Lee, annlee@andrew.cmu.edu}}
\address{$^1$Department of Physics and Astronomy, University of Pittsburgh}
\address{$^2$Pittsburgh Particle Physics, Astrophysics, and Cosmology Center (PITT PACC), University of Pittsburgh}
\address{$^3$Department of Statistical Sciences, University of Toronto}
\address{$^4$Canadian Institute for Theoretical Astrophysics (CITA), University of Toronto}
\address{$^5$Dunlap Institute for Astronomy \& Astrophysics, University of Toronto}
\address{$^6$Vector Institute}
\address{$^7$Department of Statistics and Data Science, Carnegie Mellon University}
\address{$^8$Department of Statistics, Federal University of S\~{a}o Carlos (UFSCar)}

\begin{abstract} 
Key science questions, such as galaxy distance estimation and weather forecasting, often require knowing the full predictive distribution of a target variable $y$ given complex inputs $\x$. Despite recent advances in machine learning and physics-based models, it remains challenging to assess whether an initial model is calibrated for all $\x$, and when needed, to reshape the densities of $y$ toward ``instance-wise'' calibration. This paper introduces the LADaR (Local Amortized Diagnostics and Reshaping of Conditional Densities) framework and proposes a new computationally efficient algorithm (\calpit) that produces interpretable local diagnostics and provides a mechanism for adjusting conditional density estimates (CDEs). \calpit learns a single interpretable local probability--probability map from calibration data that identifies where and how the initial model is miscalibrated across feature space, which can be used to morph CDEs such that they are well-calibrated. We illustrate the LADaR framework on synthetic examples, including probabilistic forecasting from image sequences, akin to predicting storm wind speed from satellite imagery. Our main science application involves estimating the probability density functions of galaxy distances given photometric data, where \calpit achieves better instance-wise calibration than all 11 other literature methods in a benchmark data challenge, demonstrating its utility for next-generation cosmological analyses.\footnote[2]{Code available as a Python package here: \url{https://github.com/lee-group-cmu/Cal-PIT}}

\end{abstract}

\noindent{\it Keywords}: Conditional Density Estimation, Local Coverage Diagnostics,  Calibrated Distributions, Reliable Uncertainty Quantification, Posterior Approximations.

\maketitle

\section{Introduction}\label{sec:intro}
In recent decades, many scientific fields have progressed from computing point predictions (or a single best guess of a quantity of interest) to developing full predictive distributions, or more specifically,  {\em conditional density estimates (CDEs) and generative models} of a response/``target'' variable $Y \in \mathbb{R}$ given covariates/features $\X \in \mathbb{R}^d$. This paradigm shift is evident in various disciplines, such as in astrophysics (e.g., \citealt{Mandelbaum2008PhotozPDF, Malz2022Photoz}), in weather forecasting (e.g., \citealt{gneiting2008probabilistic, Ravuri2021GenerativeWeather,Li2024GenWeather}), in financial risk management (e.g., \citealt{timmermann2000PDFinance}), and in epidemiological projections (e.g., \citealt{alkema2007HIVPD}).

The paradigm shift has been driven by two main factors.  First, advances in measurement technology across engineering, physical and biological sciences are producing data with unprecedented depth, richness, and scope. To fully exploit these data in subsequent analyzes, we need precise estimates of the uncertainty in a response variable $Y$ given observable data $\X$ (see Section~\ref{sec:motivation} for two applications from the physical sciences that motivated this work). Second, we are experiencing a rapid growth of high-capacity machine learning algorithms that allow the quantification of uncertainty for complex, high-dimensional inputs of different modalities.  Two examples of such datasets come from (1) large astronomical surveys that collect images and spectroscopic data for tens of millions of stars, galaxies and other astrophysical objects \citep{York2000SDSSOverview,GAIA2016Overview,Dey2019legacySurvey,DesiColl2022DESIOverview} and (2) earth-observing satellites for environmental and climate science (see, e.g., NASA's Earth Observing System\footnote[1]{\url{https://eospso.nasa.gov/}} and next-generation Earth System Observatory,\footnote[2]{\url{https://science.nasa.gov/earth-science/missions/earth-system-observatory/}}).  For the latter, the dimension $d$ of the input space (representing, e.g., the number of image pixels or different spatial locations) is usually several orders of magnitude larger than $10^6$. In addition to enabling uncertainty quantification for complex data, modern machine learning methods allow us to ``amortize'' the computation; that is, to perform the compute-intensive training process only once, which allows for very fast inference and scaling to massive data sets.

Machine learning methods for uncertainty quantification (UQ) include a growing range of approaches. Explicit conditional density estimation (CDE) methods directly model $f(y|\x)$, using tools like mixture density networks \citep{Bishop1994MDN}, kernel mixture networks \citep{Ambrogioni2017KMN}, normalizing flows \citep{papamakarios2019normalizing,Kobyzev2021NormalizingFlow}, and other nonparametric estimators \citep{izbicki2016nonparametric,izbicki2017converting,Dalmasso2020FlexcodePhotoz}. Implicit CDEs and generative models—such as VAEs \citep{Kingma2013Vae}, conditional GANs \citep{Mirza2014cGAN}, diffusion models \citep{Sohl-Dickstein2015DiffusionModel1,Ho2020DiffusionModels2,diffusion3,diffusion4,diffusion5}, and transformer-based generators \citep{vaswani2017Transformers,radford2019language}—represent uncertainty through learned stochastic mappings. Other strategies include quantile regression \citep{Chung2021Quantile,Fasiolo2021Quantile,Lucrezia2018Quantile,feldman2021orthogQR,lim2021temporalFusionTransformer} and ensemble-based methods, such as dropout and deep ensembles \citep{gal2016dropout,lakshminarayanan2017simple,rahaman2021uncertainty}.

The goal of this paper is not to add to this list, but rather to provide the scientist with a unified interpretable framework for deciding whether an initial model of the predictive distribution is accurate with respect to (conditional on) relevant features, and if not, suggest a mechanism for reshaping CDEs. Figure~\ref{fig:schematic} shows a schematic diagram of our LADaR approach. The starting point is an initial CDE --- which could, e.g., be derived from pre-trained neural networks on massive generic data (so-called foundation models) or physics-based models such as numerical weather prediction (NWP) models. LADaR addresses three key questions: (1) Does the initial model need to be improved with respect to relevant features? (2) Where in the feature space might it need to be improved? (3) How can it be improved? For the third question, we propose a reshaping step that adjusts the initial CDEs while leveraging its existing strengths. LADaR is particularly relevant when there are insufficient observational data to independently fit a purely ML-based CDE model, or when the scientist needs to tie results to physical processes in the native feature space (defined by, e.g., individual spectra or specific sequences of satellite imagery) to trust predictions and stated uncertainties.

\begin{figure}[htb]
	\centering
	\includegraphics[trim={0 4cm 0 0},clip=true,width=0.83\textwidth ]{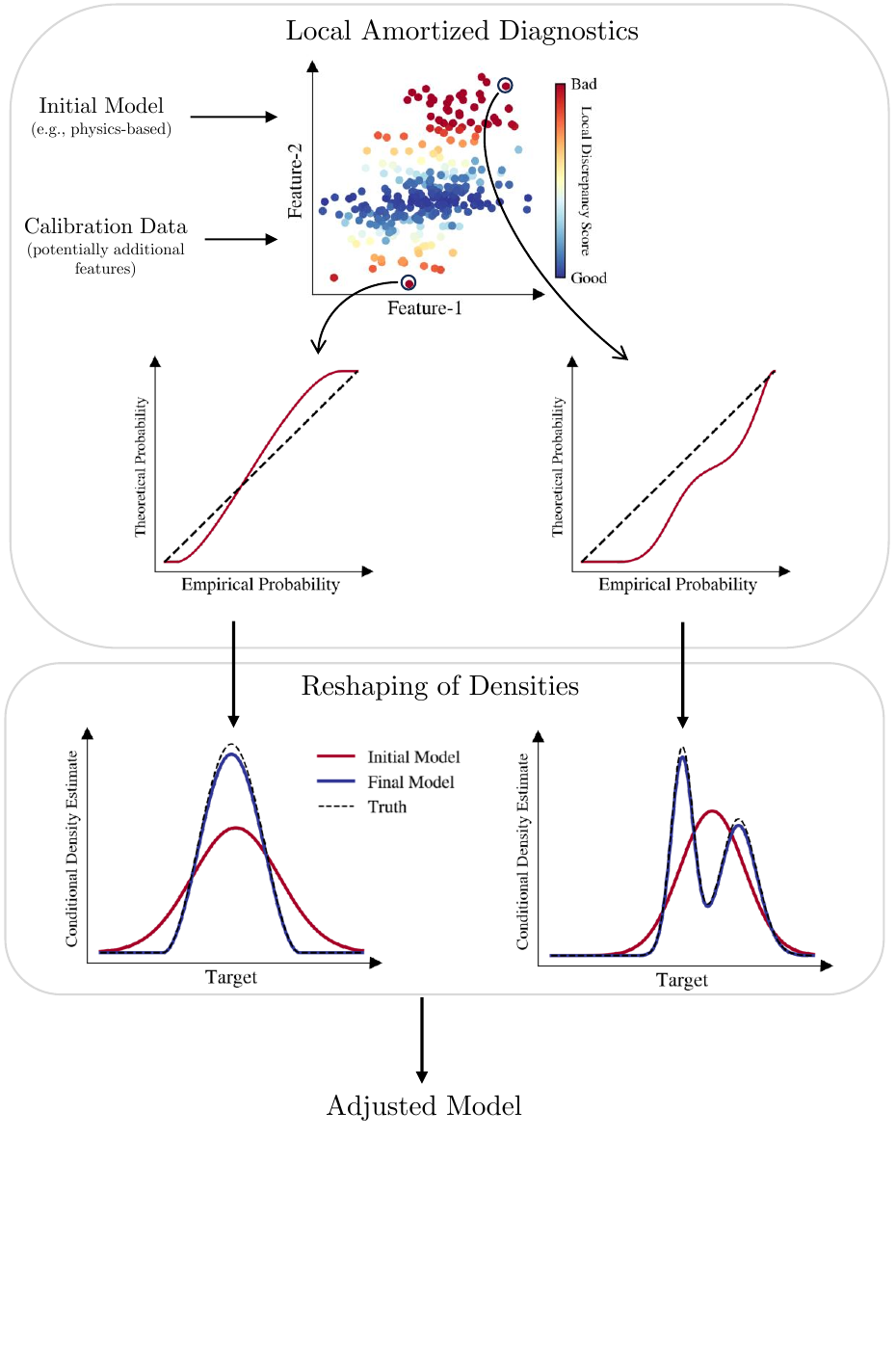}
	\caption{\footnotesize {\bf Schematic representation of the LADaR approach.} Our approach starts with an initial (e.g., physics-based or large pre-trained) model of the predictive distribution of a target quantity. We then assess the quality of the initial conditional density estimates (CDEs) on an individual basis across the feature space using calibration data, and reshape the densities if deemed necessary.  The goal is not to replace the initial model with a different end-to-end density estimator, but rather to adjust it, ensuring both calibration and insight into its potential failure modes (see Figure~\ref{fig:PPplot_interpret} for how to interpret P-P plots). The LADaR approach is particularly relevant when there are insufficient observational data to independently fit a purely machine-learning-based CDE, or when it is important to tie predictions to the underlying physical processes (encoded by the chosen feature space) to establish trust in machine-learning methods. Our framework is fully ``amortized'' over both features $\x$ and response variable $y$, which means that once we have trained LADaR to learn the map between the initial CDE model and the CDE of the calibration data, no additional training is required for new data.}
	\label{fig:schematic}
\end{figure}

\subsection{Trustworthy Uncertainty Quantification}
For a conditional density estimator  to be useful, its predicted distribution $\widehat F(y|\x)$ (with density function $\widehat{f}(y|\x)$)  must closely match the true $F(y|\x)$ for each value of the input $\x$. This property, known as {\em local or instance-wise calibration}, ensures that predicted probabilities reflect true frequencies for individual cases, and not just on average.
 
 Instance-wise UQ is essential in many practical applications; e.g., in astrophysical studies, for predicting the physical properties of individual galaxies from measured fluxes; in weather forecasts, for predicting the probability of rainfall based on current environmental conditions; and in medical research, for estimating a drug's efficacy for individuals of specific demographics. Instance-wise calibration also promotes algorithmic fairness by avoiding systematic over- or under-prediction of risks for certain groups  \citep{kleinberg2016inherent,zhao2020individual}, and enables well-calibrated prediction sets (Remark~\ref{remark:prediction_sets}).

 {\em Unfortunately, off-the-shelf CDE methods can be very far from calibrated.} 
 This is because they minimize losses that don't target calibration directly—such as KL divergence \citep{KLDivergence1951}, integral probability metrics \citep{papamakarios2019normalizing,dalmasso2020conditional}, or the pinball loss \citep{koenker2001QR}. As shown by \citet{Guo2017Calibration} and \citet{chung2021beyondpinball}, many ML methods prioritize accuracy and sharpness over calibration. To address this, new loss functions have been proposed to balance calibration and sharpness \citep{chung2021beyondpinball} or decouple coverage from sharpness \citep{feldman2021orthogQR}.

Finally, in terms of {\em diagnostics}, many common metrics for assessing calibration, like the probability integral transform (PIT; \citealt{gan1990pit}) and simulator-based calibration (SBC; \citealt{talts2018validating}), only evaluate {\em marginal} calibration—that is, average coverage over all $\x$'s (Equation~\ref{eq:marginal_calibration}). This weaker notion is often referred to simply as ``calibration''  \citep{Gneiting2014Review, kuleshov2018accurate}. However, as pointed out by \citet{Schmidt2020Photo-z}, PIT can be optimal even when the model ignores $\x$ entirely. More generally, errors across the feature space can cancel out, leading to deceptively good marginal results \citep{zhao2021diagnostics, Jitkrittum2020LocalCalibration, Luo2021LocalCallibration}. For instance, \citet[Theorem 1]{zhao2021diagnostics} showed that even models based on $F(y|g(\x))$—for any function $g$—can pass marginal tests, despite discarding relevant features.

\subsection{Well-Calibrated CDEs are Essential for the Physical Sciences}~\label{sec:motivation}  Our trustworthy CDE work is motivated by two main applications in astronomy and weather forecasting:\\

\noindent  {\em (i) Photometric redshift estimation of galaxies.}
Estimating galaxy distances, via a measurable proxy called redshift, is a fundamental task for studies of astrophysics and cosmology. While spectroscopy can precisely measure redshift, this method is too resource-intensive for the billions of galaxies detected by modern astrophysical imaging surveys, so galaxy redshifts must be predicted from imaging data alone.   In this context, the response variable $y$ is the galaxy's redshift (by convention denoted by $z$), and the predictors are photometric/imaging data $\x$.  The predictions, called photometric redshifts (\pzs), are inherently probabilistic.  Downstream science applications rely on an accurate estimate of the conditional density for each galaxy's redshift. The scientific requirements are extremely strict: to avoid biasing cosmological results, the errors in the moments of the redshift distributions for large ensembles of galaxies must be within 0.1--0.3\% of the truth \citep{DESCSRD}.

Our proposed \pz\ use case is to adopt a physics-based \pz\ model to produce initial estimates of PDFs, and then use the LADaR framework to assess the initial CDEs and reshape them if necessary.  Furthermore, the interpretability of the LADaR diagnostics will be valuable for helping astrophysicists improve both physics and machine learning-based \pz models.\\

\noindent {\em (ii) Probabilistic forecasting of the intensity of tropical cyclones (TC) from satellite imagery.} Tropical cyclones are highly organized rotating storms that are among the most costly natural disasters in the United States. TC intensity forecasts have improved in recent years, but these improvements have been relatively slow during the last decade compared to improvements in track forecasts, particularly at 24-hour lead times~\citep{IMPROVE}. The latest generation of geostationary satellites (GOES), such as GOES-16, now provides unprecedented spatio-temporal resolution of TC structure and evolution~\citep{schmit2017closer}. 
A broad range of recent work involving neural networks has explored the wealth of information from GOES imagery for TC short-term intensity prediction (e.g., \citealt{olander2021investigation, griffin2022predicting}). In this context, the response variable $Y$ is the TC's intensity (wind speed) at time $t+\tau$ for a lead time $\tau$ of up to 24 hours, and $\x$ could represent environmental predictors and a sequence of images at the current time $t$ and preceding time points.  In Section~\ref{sec:example_TCs}, we present a TC-inspired synthetic example that highlights the efficacy of our method in diagnosing and recalibrating intensity forecasts with high-dimensional sequence data as inputs.

\subsection{Our Contribution}  

To ensure reliable uncertainty quantification with CDEs, it is essential to have (i) interpretable diagnostics that can assess instance-wise calibration and failure modes of an initial model across the entire feature space of reference data, and (ii) computationally efficient methods that can reshape CDEs so that they are approximately calibrated for every $\x$.  The initial model can, for example, represent the best approximation to the true conditional density according to a physics-motivated or a mathematical model, or from a data-driven model limited to a set of easily accessible input features or data sources.

The goal is to morph the initial model towards the true distribution of the quantity of interest by leveraging calibration data and machine-learning techniques, when such an adjustment is deemed to be necessary by the scientist. This work offers two primary contributions:

\begin{itemize}
    \item  From a methodological perspective, we present a unified framework for interpretable diagnostics and reshaping of entire CDEs through a single Probability-Probability (P-P) map learned from calibration data $\mathcal{D} = \{(\X_1, Y_1), \ldots, (\X_n, Y_n)\}$, which implicitly encodes the true distribution $F(y|\x)$. Our approach is fully ``amortized,'', which means that once a regression model is trained to learn the mapping, no additional training is required for new data. We refer to the general framework of Local Amortized Diagnostics and Reshaping of CDEs as LADaR, and call our proposed algorithm \calpit. The first prototype code  of \calpit occurred in \citep{Dey2021calpitProto}; the full ready-to-use and modifiable implementation is now available as a Python package at \href{https://github.com/lee-group-cmu/Cal-PIT}{\url{https://github.com/lee-group-cmu/Cal-PIT}}.
    
    \item From an application perspective, \calpit is uniquely positioned to provide diagnostics and ensure that \pz CDEs are \textit{locally} calibrated (i.e., not only as a full ensemble), which will be necessary for the astrophysics community to meet the stringent \pz requirements for next generation-astronomical surveys. Figure \ref{fig:photo-z-local} and Table \ref{tab:photo_z_comparison} demonstrate the full potential of \calpit applied to a key benchmark \pz data set, where \calpit outperforms the current state-of-the-art for diagnostics and estimation of \pz CDEs. Crucially, \calpit can (i) accurately reshape biased probability distributions and (ii) reshape unimodal distributions into multimodal distributions---both common failure modes for common \pz estimation methods. Furthermore, \calpit has the flexibility to be used with \textit{high-dimensional} and \textit{dependent sequence} data.  Section~\ref{sec:example_TCs} shows \calpit applied to probabilistic forecasting with sequences of images as inputs, akin to predicting the wind speed of tropical cyclones (TCs) from satellite imagery. 
   
\end{itemize}

\section{Related Work}

\subsubsection*{Goodness-of-Fit Tests and Calibration.}  Goodness-of-fit of conditional density models to observed data can be assessed by two-sample tests \citep[e.g., ][]{andrews1997condtest, stute2002condtest, moreira2003condtest,jitkrittum2020cde}. 
 Such tests are useful for deciding whether a conditional density model needs to be improved, but do not provide any means to correct discrepancies. One way to recalibrate CDEs (proposed, e.g., by \citealt{bordoloi2010photoz}) is to first assess how the marginal distribution of PIT values differs from a uniform distribution by diagnostic tools \citep{cook2006validating,freeman2017photoz,talts2018validating,disanto2018cmdn}, and then apply corrections to bring them into agreement. However, by construction, such recalibration schemes only improve marginal calibration. In this work, we instead build on \citet{zhao2021diagnostics}, which proposes a version of PIT that is estimated throughout the {\em entire} input feature space, allowing us to directly assess and target conditional coverage.

\subsubsection*{Quantile Regression.} 
Quantile regression intervals converge to the oracle $C^{*}_{\alpha}(\X)=\left[F^{-1}(0.5\alpha|\X), F^{-1}(1-0.5\alpha|\X)\right]$ \citep{koenker1978QR,taylor1999quantile}. Even though the prediction interval $C^{*}_{\alpha}(\X)$ satisfies conditional validity, 
 \begin{equation*} 
  \mathbb{P}(Y\in C_\alpha(\X) |\X = \x) = 1 -\alpha, \ \ \forall \x \in \mathcal{X}, 
\end{equation*}
 the standard pinball loss can yield highly miscalibrated UQ models for finite data sets \citep{chung2021beyondpinball,feldman2021orthogQR}. New loss functions have been proposed to address this problem \citep{chung2021beyondpinball,feldman2021orthogQR}. Our approach also provides calibrated prediction regions, but is more general, yielding full CDEs and not only prediction intervals.

 \subsubsection*{Conformal Inference.} 
 Conformal prediction methods have the appealing property of producing prediction sets with finite-sample marginal validity, $\mathbb{P}(Y\in C(\X)) \geq 1 -\alpha$, as long as the data are exchangeable \citep{Vovk2005, lei2018distribution}. However, there is no guarantee that conditional validity is satisfied, even approximately. 
 More recent efforts have addressed approximate conditional validity \citep{romano2019conformalizedQR,izbicki2020Dist-Split,chernozhukov2021distributional,izbicki2022,cabezas2025regression} 
 by designing conformal scores with an approximately homogeneous distribution throughout $\mathcal{X}$.
 Unfortunately, it is difficult to check whether these methods provide good conditional coverage in practice.  Moreover, our method provides estimates of the full CDE, and not only prediction bands. Calibrated CDEs imply calibrated prediction bands, but not vice versa.

\section{Methodology}\label{sec:methodology}

{\em Objective and notation.} Our LADaR goal is to reshape an initial (often simple) cumulative distribution $\hat{F}(y|\x)$, or equivalently, its conditional density $\hat{f}(y|\x)$, to achieve approximate instance-wise calibration with respect to some implicit (often more complex but not explicitly known) target distribution $F(y|\x)$. 
Instance-wise calibration is defined as
\begin{equation} \label{eq:conditional_calibration}
\centering
  \hat{F}(y|\x) = F(y|\x), \ \ \textrm{for all } y, \textrm{at every } \x,
\end{equation}
  and is sometimes also referred to as conditional or local calibration. Instance-wise or conditional calibration implies marginal calibration
 \begin{equation} \label{eq:marginal_calibration}
  \hat{F}(y) = F(y), \ \ \textrm{for all } y,
\end{equation}
 whereas the reverse implication is not true.

To achieve instance-wise calibration, we assume the availability of i.i.d.\ calibration data  $\mathcal{D} = \{(\X_1, Y_1), \ldots, (\X_n, Y_n)\}$ from $F_{\X,Y}(\x,y)$, the joint distribution of $(\X, Y)$, where $\x \in \mathcal{X} \subseteq \mathbb{R}^d$ and $\y \in \mathcal{Y} \subseteq \mathbb{R}$. We assume that the joint distribution  is a product $F_{\X,Y}(\x, y)=F(y|\x)F(\x)$ of the target distribution $F(y|\x)$ and some sampling distribution $F(\x)$ with support over the entire feature space $\mathcal{X}$.  

 In this paper, we propose a solution to the problem of diagnosing and ensuring local calibration of conditional densities based on probability integral transforms. We refer to the algorithm and the code as \calpit. The details are as follows.

\subsection{Overview of the \calpit \ Algorithm}

The \calpit algorithm first computes interpretable diagnostics using regression that identifies the failure modes of the initial conditional density model and pinpoints the location of poorly calibrated instances in a potentially high-dimensional feature space. The same regression function used for diagnostics is then used to continuously transform the potentially misspecified CDE into a new CDE that is approximately calibrated for all $\x$.

\calpit builds on the observation that an estimate of a cumulative distribution function (CDF), $\widehat{F}$, is calibrated for every instance $\x$, if and only if its probability integral transform (PIT) value conditional on $\x$, defined by $\PIT(Y; \X) := \hat{F}(Y|\X)$, where $(\X,Y)$ is drawn from $F_{\X,Y}$, is uniformly distributed \citep[Corollary 1]{zhao2021diagnostics}. As a result, if a CDE is well-calibrated, the cumulative distribution function of the PIT (hereafter PIT-CDF), defined as the cumulative distribution of the PIT random variable  evaluated  at $\gamma \in (0,1)$,
\begin{equation}\label{eq:r_alpha}
    r^{\hat{f}}(\gamma; \x) := \pr \left( \PIT(Y; \X) \leq \gamma \ \middle| \ \x \right),
\end{equation}
will be approximately $\gamma$ for all $\x \in \mathcal{X}$ and $\gamma \in (0,1)$. In other words, the PIT-CDF will then correspond to the CDF of a uniform random variable for all $\x$. The PIT-CDF provides valuable information as to whether $\hat F$ is miscalibrated, and if so, for what instances $\x$, for what types of deviations and to what extent. Specifically,  local probability-probability (P-P) plots --- which graph the PIT-CDF value $ r^{\hat f}(\gamma;\x)$, the empirical probability, against $\gamma$, the theoretically expected probability, for fixed $\x$ ---  offer valuable information on how close the probability distribution $\hat{F}(Y|\X)$ is to $F(Y|\X)$ at different locations $X=\x$ in the feature space. Figure~\ref{fig:PPplot_interpret} shows a schematic diagram of some P-P plots and how to interpret them. 

\begin{figure}[htb]
	\centering
	\includegraphics[trim={0 15cm 0 0},clip=true,width=\textwidth ]{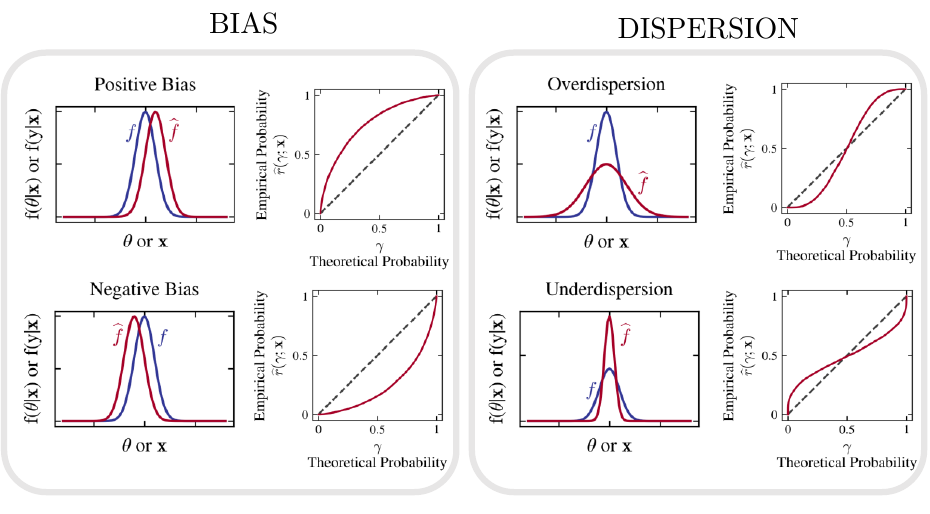}
	\caption{\footnotesize {\bf Interpretable diagnostics.} P-P plots are commonly used to assess how well
			a probability density model fits actual data. Such plots display, in a clear and interpretable way, effects like {\em bias} ({\em left panel}) and {\em dispersion} ({\em right panel}) in an estimated distribution $\widehat f$ vis-a-vis 
			the true data-generating distribution $f$. Our framework yields  an amortized approach to constructing  local P-P plots for comparing  
   Bayesian posteriors $\widehat f(\theta|\x)$ or predictive densities $\widehat f(y|\x)$ at {\em any} location $\x$ of the feature space $\mathcal{X}$. Figure adapted from \citealt{zhao2021diagnostics}. An interactive version of this figure can be found at: \url{https://lee-group-cmu.github.io/cal-pit-paper/fig_1_interactive/}.
	}
	\label{fig:PPplot_interpret}
\end{figure}

However, in practice, because the distribution of the PIT statistic depends on the true conditional distribution of $Y|\x$, the PIT-CDF is unknown. Section~\ref{sec:diagnostics} describes how one can estimate the PIT-CDF across the feature space from calibration data using a regression method suitable for the problem at hand. Our proposed approach is {\em amortized}, in the sense that one can train on $\x$ and $\gamma$ jointly, after which the function PIT-CDF can be evaluated for any new values of $\x$ and $\gamma$. Finally, Section~\ref{sec:reshape} describes how the learnt PIT-CDF itself defines a push-forward map (Equation~\ref{eq:recalibrated_PD}) that reshapes the densities so as to achieve approximate local calibration. Algorithm~\ref{alg:CalPIT} summarizes the details of the \calpit method.

\subsection{Estimating the PIT-CDF}
\label{sec:diagnostics}
We observe that the PIT-CDF in Equation~\ref{eq:r_alpha} is the regression (conditional mean) of a binary random variable $W^{\gamma} := \I(\PIT(Y;\X) < \gamma)$ on $\X$; that is, 
\begin{equation} \label{eq:PIT-CDF}
r^{\hat{f}}(\gamma; \x) = \mathbb{E} \left[ W^\gamma \ \middle| \ \x \right],
\end{equation} where the expectation $\mathbb{E}[\cdot \mid  \x]$ denotes an average with respect to the (unknown) target distribution $F(y|\x)$. The above expression indicates that we can estimate the PIT-CDF across the entire feature space $\mathcal{X}$, as well as for different quantiles $\gamma \in (0,1)$, via regression methods.

\calpit is implemented as follows: First, we augment the calibration data $\mathcal{D}$ by drawing multiple quantile values $\gamma_{i,1}, \ldots, \gamma_{i,K} \sim U(0,1)$ for each calibration data point ($i=1,\ldots,n$) and some chosen hyperparameter $K$. Next, we define the random variable $$W_{i,j} := \mathbb{I}(\PIT(Y_i; \mathbf{X}_i) \leq \gamma_{i,j}).$$ Finally, we train a suitable regression method using the augmented calibration sample $\mathcal{D}' = \{(\mathbf{X}_i, \gamma_{i,j}, W_{i,j})\}_{i,j}$ to predict $W_{i,j}$ with $(\mathbf{X}_i, \gamma_{i,j})$ as inputs, for $i=1,\ldots,n$ and $j=1,\ldots,K$. The computed regression function is an estimate of $\pr \left( \PIT(Y; \X) \leq \gamma \ \middle| \ \x \right)$. Since $r^{\hat{f}}(\gamma; \mathbf{x})$ is a non-decreasing function of $\gamma$, we typically choose monotonic neural networks \citep{Wehenkel2019UMNN} minimizing the binary cross-entropy loss \citep{BCELoss} as our regression method, especially for applications with complex and high-dimensional inputs of different modality. This loss function enforces $\pr \left( \PIT(Y; \X) \leq \gamma \ \middle| \ \x \right)$ to be well estimated \citep{dawid2014theory}.

\subsection{Reshaping Conditional Densities by Mapping Probabilities to Probabilities} \label{sec:reshape}
\calpit\ uses the estimated PIT-CDF to reshape the initial CDE $\widehat f$ into a new CDE $\widetilde f$ that is approximately locally consistent across the {\em entire} feature space. 
 
Our procedure for morphing one probability density into a new ``recalibrated'' density works as follows:
Consider a fixed evaluation point $\x$ and any $y_0 \in \mathcal{Y}$.  Let $\gamma := \hat{F}(y_0|\x)$.  If the regression is perfectly estimated (that is, $\widehat r^{\widehat f}=r^{\widehat f}$),  then, as long as both $F$ and $\widehat F$ are continuous and $\widehat F$ dominates $F$ (see Assumptions \ref{assump:continuity} and \ref{assump:dominates} in  \ref{sec:theory} for details), it holds that
$$	r^{\widehat f}(\gamma;\x) := \pr\left(\hat F(Y|\x) \leq  \gamma  \ \middle| \ \x \right) =  \pr\left(Y \leq y_0 \ \middle| \ \x \right)
	=F(y_0|\x).$$
In other words, the regression function $r^{\widehat f}$ changes the initial CDE so that the probability of observing the response variable $Y$ below $y_0$ is now indeed $F(y_0|\x)$ rather than $\hat{F}(y_0|\x)$.

It follows directly that for fixed $\hat F$,
$$r^{\hat f}\left(\hat F(y|\x);\x\right) =  \pr\left(\hat F(Y|\x) \leq  \hat F(y|\x)   \ \middle| \ \x \right) = \pr\left(Y \leq y \ \middle| \ \x \right)
	=F(y |\x)$$
The above result suggests that we can use the estimated regression, $\hat{r}^{\hat f}$, which is an approximation of the PIT-CDF, $r^{\hat f}$, to transform the original distribution $\hat F$ with density $\hat f$ into a  new  ``recalibrated'' conditional distribution $\widetilde F$ with density $\widetilde f$:

  \begin{Definition}[Recalibrated CDE]\label{def:recalibrated_PD}
  The recalibrated CDE of $Y$ given $\x$ is defined through the P-P map,
 \begin{equation}\widetilde F(y|\x):= \hat{r}^{\hat f}\left(\hat F(y|\x);\x\right),
 \label{eq:recalibrated_PD}\end{equation} 
 where $\hat{r}^{\hat f}$ is the regression estimator of the PIT-CDF (Equation~\ref{eq:r_alpha}).
\end{Definition}

If the PIT-CDF is well-estimated, then the new CDE will  achieve instance-wise calibration.  
The next theorem shows that, under some assumptions,
we can directly relate the quality of the recalibration (or how close the ``recalibrated'' distribution is to the target distribution in a mean-squared-error sense)  to the mean-squared-error of the regression estimator:
\begin{thm}[Performance of the recalibrated CDE]
\label{thm:performance_PD}
Under Assumptions  \ref{assump:continuity}, \ref{assump:dominates} and \ref{assump:bounded} (\ref{sec:theory}),
$$ \E \left[ \int \int \left(\widetilde F(y|\x)-F(y|\x) \right)^2  dP(y,\x)\right]=K \E \left[ \int \int \left(\widehat r^{\widehat f}(\gamma;\x)-r^{\widehat f}(\gamma;\x) \right)^2 d\gamma dP(\x) \right],$$ 
\end{thm}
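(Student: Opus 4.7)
The plan is to reduce the left-hand side to an integral over $\gamma$ by a change of variables, then control the resulting Jacobian by Assumption~\ref{assump:bounded}. The key identity that drives the argument is that, under Assumptions~\ref{assump:continuity} and \ref{assump:dominates}, the true PIT-CDF satisfies
\begin{equation*}
  r^{\widehat f}(\widehat F(y|\x);\x) = \mathbb{P}(\widehat F(Y|\x) \leq \widehat F(y|\x) \mid \x) = \mathbb{P}(Y \leq y \mid \x) = F(y|\x),
\end{equation*}
using continuity of $\widehat F(\cdot|\x)$ to ensure $\{\widehat F(Y|\x)\leq \widehat F(y|\x)\} = \{Y \leq y\}$ almost surely, and using that $\widehat F$ dominates $F$ so that no mass of $Y|\x$ lies in regions where $\widehat F$ is flat. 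Combined with Definition~\ref{def:recalibrated_PD}, this gives the pointwise identity
\begin{equation*}
  \widetilde F(y|\x) - F(y|\x) = \widehat r^{\widehat f}(\widehat F(y|\x);\x) - r^{\widehat f}(\widehat F(y|\x);\x).
\end{equation*}

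Next, I would fix $\x$ and perform the change of variables $\gamma := \widehat F(y|\x)$ in the inner integral. By Assumption~\ref{assump:continuity} this map is continuous, and on the support of $F(\cdot|\x)$ (which is contained in the support of $\widehat F(\cdot|\x)$ by Assumption~\ref{assump:dominates}) we have $d\gamma = \widehat f(y|\x)\,dy$. Writing $dP(y|\x) = f(y|\x)\,dy = \tfrac{f(y|\x)}{\widehat f(y|\x)}\,d\gamma$, Assumption~\ref{assump:bounded} bounds the Radon--Nikodym derivative $f(y|\x)/\widehat f(y|\x)$ by $K$ uniformly in $y$ and $\x$. Substituting the pointwise identity and this bound yields, for each $\x$,
\begin{equation*}
  \int \bigl(\widetilde F(y|\x)-F(y|\x)\bigr)^2 \, dP(y|\x) \;\leq\; K \int_0^1 \bigl(\widehat r^{\widehat f}(\gamma;\x) - r^{\widehat f}(\gamma;\x)\bigr)^2 d\gamma.
\end{equation*}

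Finally, I would take expectations over $\x \sim P(\x)$ and over the calibration sample (conditional on $\widehat f$, as stated) and apply Fubini to match the right-hand side of the theorem. (Strictly speaking the conclusion one obtains by this route is an inequality $\leq K$, which I would flag and verify against the stated $=K$; presumably the statement is an upper bound written with an equals sign, or the authors intend the change-of-variables formula in a tight form.) The principal technical obstacle is the change of variables when $\widehat F(\cdot|\x)$ is not strictly increasing: if $\widehat F$ has flat regions, then $\gamma = \widehat F(y|\x)$ is not a bijection. Assumption~\ref{assump:dominates} resolves this, since on any flat region of $\widehat F(\cdot|\x)$ the measure $F(\cdot|\x)$ must also be null, so those regions contribute zero to both sides and the substitution remains valid on the effective support. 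Apart from this, the argument is a clean composition of the PIT identity, a Jacobian computation, and the bounded-density assumption.
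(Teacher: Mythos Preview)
Your proposal is correct and follows essentially the same approach as the paper: establish the pointwise identity $\widetilde F(y|\x)-F(y|\x)=\widehat r^{\widehat f}(\gamma;\x)-r^{\widehat f}(\gamma;\x)$ with $\gamma=\widehat F(y|\x)$ (the paper packages this as Lemma~\ref{lemma:relationshipFandR}, with the flat-region issue handled by Lemma~\ref{lemma:equality_cumulative}), then change variables and bound the Radon--Nikodym derivative by $K$. You are also right to flag the ``$=K$'' versus ``$\leq K$'': the paper's own proof in fact derives an inequality, so the equality in the theorem statement is a slip.
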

\noindent The rate of convergence of $\widetilde F(y|\x)$ to the target distribution $F(y|\x)$  is given by Corollary~\ref{cor:rate}.

\begin{algorithm}[!ht]
	\caption{
		\calpit 
	}\label{alg:CalPIT}
		\algorithmicrequire \ {\footnotesize initial CDE $\hat f(y|\x)$ evaluated at $y\in G$;
		calibration set  $\mathcal{D}=\{(\X_1,Y_1),\ldots,(\X_n,Y_n)\}$; oversampling factor $K$; 
		  evaluation points $\mathcal{V} \subset \mathcal{X}$; 
		 nominal miscoverage level $\alpha$,  flag \texttt{HPD} (true if computing HPD sets)
	}\\
		\algorithmicensure \ {\footnotesize new distribution $\widetilde{F}(y|\x)$, \calpit interval $C(\x)$, new density estimate  $\widetilde{f}(y|\x)$, for all $\x \in \mathcal{V}$}\\
		
	\begin{algorithmic}[1]
	    \State \codecomment{Learn PIT-CDF from augmented and upsampled calibration data $\mathcal{D'}$}
		\State  Set $\mathcal{D'} \gets \emptyset$
	    \For{$i$ in $\{1,...,n\}$}
		    
		    \For{$j$ in $\{1,...,K\}$} 
		         \State Draw $\gamma_{i,j} \sim U(0,1)$
	 		    \State Compute $W_{i,j} \gets \I  \left(\PIT(Y_i; \X_i) \leq \gamma_{i,j} \right)$
	 		    \State Let $\mathcal{D'} \gets  \mathcal{D'}  \cup \{\left(\X_i,\gamma_{i,j}, W_{i,j} \right)\}$   
 \EndFor
	 	\EndFor  
			    \State Use  $\mathcal{D'}$ to learn $\hat r^{\hat f}(\gamma;\x) := \hat{\pr} \left( \PIT(Y;\x) \leq \gamma \ \middle| \ \x \right)$ via a regression of $W$ on $\X$ and $\gamma$, which is monotonic w.r.t. $\gamma$. 
			    \\
	\State 
        \codecomment{Map initial CDE into a new CDE by applying learnt PIT-CDF} 
		\For{$\x \in \mathcal{V}$ }
        \State \codecomment{Construct recalibrated CDE}
        
		\State Compute $\widehat{F}(y|\x) \gets \mathrm{cumsum}(\widehat f(y|\x))$ for $y \in G$
		  \State Let $\widetilde F(y|\x) \gets  \hat r^{\hat f}\left( \widehat F(y|\x) ;\x\right)$ for $y \in G$    
		  \State Apply interpolating (or smoothing) splines to obtain $\widetilde{F}(\cdot|\x)$ and $\widetilde{F}^{-1}(\cdot|\x)$ 
		 \State Differentiate $\widetilde F(y|\x)$ to obtain new distribution $\widetilde f(y|\x)$ for $y \in G$
  		\State Renormalize $\widetilde f(y|\x)$ according to \citet[Section 2.2]{izbicki2016nonparametric}
		  \State
		  \State \codecomment{ Construct Cal-\texttt{PIT} interval with conditional coverage $1-\alpha$} 		
		\State Compute $C(\x) \gets [\widetilde F^{-1}(0.5\alpha|\x);  \widetilde F^{-1}(1-0.5\alpha|\x) ]$.
  		\If{\texttt{HPD}}
  		
  		\State Obtain HPD sets
  		 $C(\x)=\{y: \widetilde f(y|\x)\geq \widetilde t_{\x,\alpha}\}$,
	  where $\widetilde t_{\x,\alpha}$ is such that 
	  $\int_{y \in C_\alpha(\x)} \widetilde f(y|\x)dy=1-\alpha $
		\EndIf
		\EndFor
		\State \textbf{return} $\widetilde{F}(y|\x)$, $C(\x)$, $\widetilde f(y|\x)$,  for all $\x \in \mathcal{V}$ 	
	\end{algorithmic}
\end{algorithm}

 Algorithm~1 details the \calpit procedure for computing the PIT-CDF from calibration data, and for constructing recalibrated  CDEs and prediction intervals. In practice, for each $\x$ of interest, we first evaluate  $\widetilde F(y|\x)$ across a grid $G$ of $y$-values, and then use linear or spline-based interpolation scheme to calculate the derivatives to finally obtain $\widetilde f(y|\x)$, our estimate of the recalibrated CDE at $\x$.

\begin{Remark}\label{remark:support}
If the initial model is good, then $r$ is easy to estimate; for instance, $\widehat f=f$ implies a constant function $r^{\hat f}(\gamma;\x)=\gamma$. However, $\widehat f$ needs to have support on the entire range of the target variable $y$ across the feature space $\mathcal{X}$. Depending on the application, a viable initial model could, for example, be an estimate of the marginal distribution $f(y)$, a uniform distribution with finite support (as in Experiment 2 of \ref{app:example_prediction_sets}, Example 3), an initial fit of the density with a Gaussian distribution (as in the TC application in Section~\ref{sec:example_TCs}), or a nonparametric density estimate (as in Experiment 1 of \ref{app:example_prediction_sets}, Example 3). In the \pz application in Section~\ref{sec:photo-z}, we use a weighted sum of the marginal distribution $f(y)$ and a Gaussian model for $f(y|\x)$. The Gaussian model was obtained from a widely popular \pz method (\texttt{GPz}; \citealt{Almosallam2016GPz}); the marginal distribution was then added to expand the support of the fitted Gaussian distribution. 
\end{Remark}

\begin{Remark}[CDEs and Prediction Sets]
 ~\label{remark:prediction_sets} 
As a by-product of conditional distributions,  one can derive various quantities of interest, such as moments, kurtosis, prediction intervals, or even more general prediction bands; such as Highest Predictive Density (HPD) regions $\left\{y: f(y|\x)> c \right\},$ where $f$ is the conditional density associated to $F$; see \ref{app:cal_hpd} for details on how to compute HPD regions. By construction, locally calibrated CDEs yield prediction bands with approximately correct {\em conditional} coverage. That is, suppose that $C_{\alpha}(\X)$ is a $(1-\alpha)/100\%$   prediction band derived from the CDF $\widehat F$. Local calibration of $\widehat F$ then implies that the prediction bands $C_{\alpha}(\X)$ 
  have approximate nominal coverage
\begin{equation} \label{eq:cond_validity}
  \mathbb{P}(Y\in C_\alpha(\X) |\X = \x) = 1 -\alpha, 
\end{equation}
for {\em every} instance  $\x \in \mathcal{X}$.
On the other hand, it is difficult to convert prediction  bands  and quantile estimates to entire  CDEs without additional assumptions. That is, calibrated CDEs imply calibrated prediction bands but not vice versa. For example, Theorem \ref{thm:consistency} in \ref{sec:theory} shows that a \calpit \ prediction interval at $\x$, defined as
\begin{equation} \label{eq:CalPIT_int}
C_\alpha(\x) :=\left[\widetilde F^{-1}(0.5\alpha|\x), \ \widetilde F^{-1}(1-0.5\alpha|\x)\right], 
\end{equation}
 achieves asymptotic conditional coverage, even if the initial CDE $\hat{f}(y|\x)$ is not consistent.
\end{Remark}

\section{Synthetic Examples} \label{sec:toy_examples}

\subsection{Example 1: Diagnostics and Reshaping of CDEs via P-P maps}\label{sec:example_sinh_arc_sinh}

This example illustrates the LADaR framework with \calpit: We start with an initial model $f_0(y|\x)$ (in this case, a Gaussian density with correct mean and fixed variance). Then, via a PIT-CDF regression (Equation~\ref{eq:PIT-CDF}), we learn the local diagnostics which can be visualized via P-P plots similar to Figure~\ref{fig:PPplot_interpret}. Finally, we reshape the initial densities to better fit the calibration data by applying the same learned P-P map  (that is, the PIT-CDF transformation) to the initial densities (Equation~\ref{eq:recalibrated_PD}).

As an illustration, we create a ``skewed'' data setting. The data are drawn from the family of sinh-arcsinh normal distributions \citep{jones2009sinh,jones2019sinhnormal}, where the {\em skewed} data follow
$$ Y_A | X \sim \textrm{sinh-arcsinh}(\mu=X, \sigma=2-|X|, \gamma=X, \tau=1),$$

We start with an initial Gaussian model given by
$$ Y|X \sim \mathcal{N}(\mu=X, \sigma=2),$$
and we learn the PIT-CDF function $\widehat r^{f_0}(\gamma;\x)$ from a calibration set of $n=10000$ pairs of $(X,Y)$.

The top panel of figure~\ref{fig:ex1_skewed} shows ``Local Amortized Diagnostics'' for the skewed setting: The first row graphs a {\em local discrepancy score} (LDS) across the feature space (see \citealt{Kodra2023} for an example use of the global analog), where the LDS is defined as
\begin{equation} 
D(\x):=\frac{1}{|G|}\sum_{\gamma \in G} (\widehat r^{f_0}(\gamma;\x) -\gamma)^2, \label{eq:discrepancy_score}
\end{equation} 
for a set $G \subset [0,1]$ of $\gamma$ values. The LDS is a one-number summary that estimates the amount of discrepancy between the initial model and the true density in terms of coverage: a large value of $D(\x)$ indicates that $f_0$ is miscalibrated at the evaluation point $\x$. The PIT-CDF function $\hat{r}^{f_0}$ then provides more detailed information on {\em how} the initial model $f_0(y|\x)$ might deviate from the true density $f(y|\x)$ at $\x$, 
as illustrated by the shape of the P-P plots in the second row. Top panel II (``Reshaping of Densities'')  shows examples of morphing the initial density $f_0$ (\textcolor{blue}{blue}) into an approximation $\widetilde{f}$ (\textcolor{red}{red} ) of the final density defined by Equation~\ref{eq:recalibrated_PD}. For illustrative purposes, we show intermediate curves $s\widetilde{f} + (1-s)f_0$ for a few different values $s \in [0,1]$.

Finally, because we know the true data-generating distribution $F$, we can directly assess the quality of the reshaped densities $\widetilde{f}$ by first generating MC samples from the true distribution at each evaluation point $\x$, and then computing a local version of the continuous rank probability score {\em} (CRPS). More specifically: CRPS is a proper scoring rule commonly used to evaluate probabilistic predictions \citep{matheson1976scoring}. The local CRPS loss at a point $(\x,y)$ is typically defined as 
\begin{equation}
L_{\mathrm{CRPS}}(\widetilde{f};\x, y) = \int_{-\infty}^{\infty} \left( \widetilde{F}(t|\x) - \I(y \leq t) \right)^2 dt,
\end{equation}
which checks whether a single draw $y \sim F(Y|\x)$ (from the unknown true distribution $F$) is consistent with the estimated distribution $\widetilde{F}(y|\x)$.
However, for our synthetic examples, we can generate an entire MC sample $Y_1, \ldots, Y_B \sim F(y|\x)$ (from the known true distribution $F$) at {\em any} fixed evaluation point $\x$ for some chosen large value $B$. We then define the local Monte Carlo CRPS (MC-CRPS) loss at fixed $\x$ as

\begin{equation}
    \label{eq::localMCCRPS}
 L_{\mathrm{MC-CRPS}}(\widetilde{f};\x, f) =  \int \left(\widetilde F(t|\x) - \frac{1}{B} \sum_{b=1}^B I(Y_b<t) \right)^2 dt, 
\end{equation}
For large $B$, Equation~\ref{eq::localMCCRPS} is close to zero when $\widetilde F(\cdot|\x)$  is a good estimate of $F(\cdot|\x)$. Furthermore,  Equation \ref{eq::localMCCRPS} is, up to a constant that does not depend on $\widetilde F$,
approximately the same as $\E\left[L_{\mathrm{CRPS}}(\widetilde{f};\x, Y) \mid \x \right]$, the conditional mean of the CRPS loss given $\X=\x$ (see \ref{appendix:local_CRPS} for more details). The bottom panel of Figure~\ref{fig:ex1_skewed} shows the local MC-CRPS results before and after applying \calpit\ for the ``skewed'' setting. The corresponding results for a ``kurtotic'' setting can be found in~\ref{app:example_kurtotic}.

\begin{figure}[H]
\hspace*{-0.0cm}
\centering
\includegraphics[trim={0 3cm 0 0},clip=true,width=0.9\columnwidth]{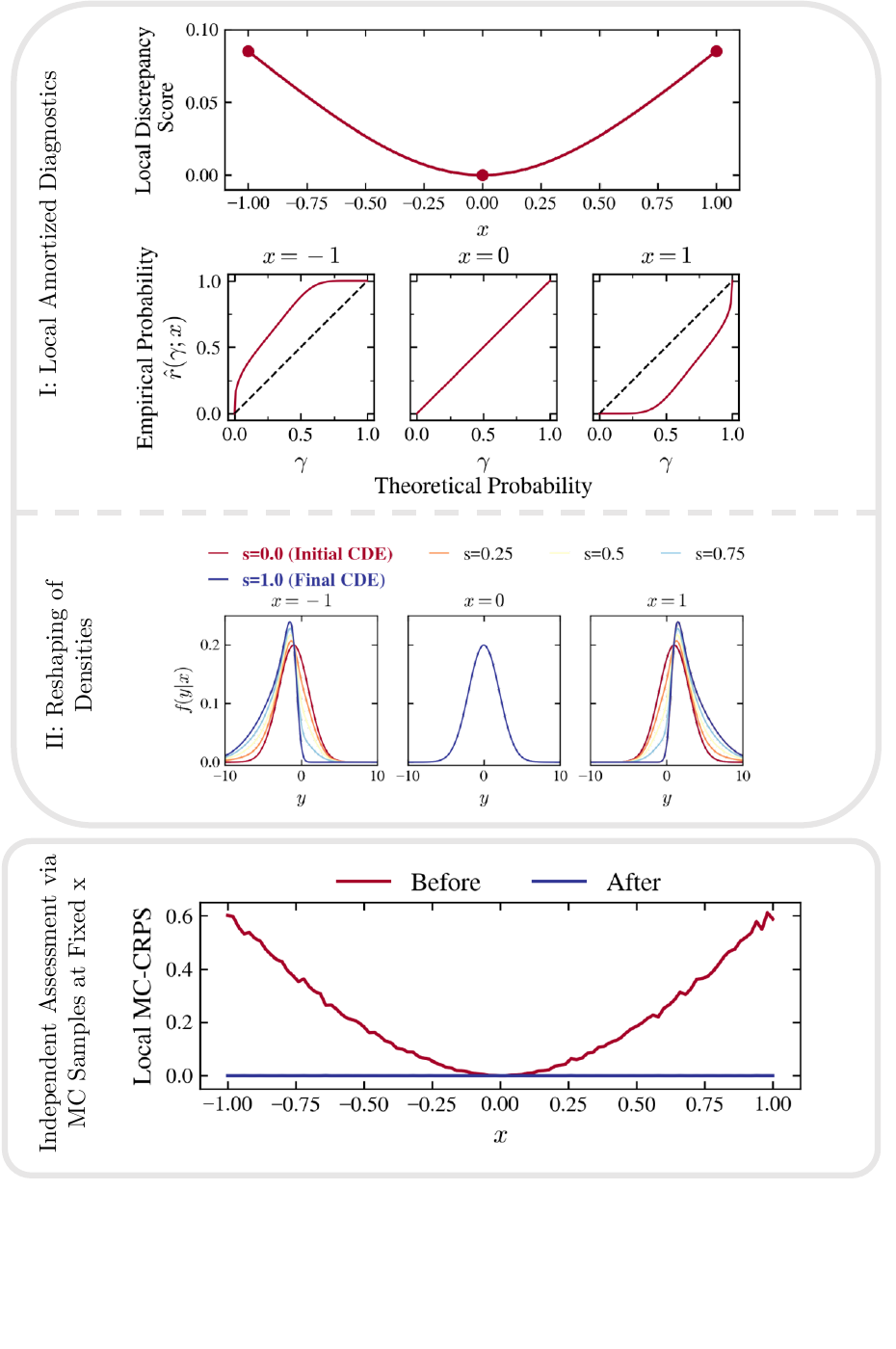}
		\caption{\footnotesize {\bf Illustration of LADaR framework: Example 1 skewed data.} Initial CDE is Gaussian, but the true distribution is skewed. {\em Top panel (I):} Local discrepancy score across the input space (first row) and examples of diagnostic P-P plots (second row). \calpit identifies that the model is {\em positively/negative biased} relative to calibration data at $X =-1$ / $X =1$ but well-estimated at $X =0$. The diagnostics define a family of P-P maps for reshaping the initial densities to fit the calibration data across the feature space. {\em Top panel (II):} Continuous morphing of densities via \calpit, illustrated at the three evaluation points, from the initial Gaussian distributions (\textcolor{red}{\bf red}; $s=0$) to the final distributions (\textcolor{blue}{\bf blue}; $s=1$). For illustrative purposes, we have included intermediate values of $s$ to show the morphing of distributions. {\em Bottom panel:} Independent assessment of final results by computing a local Monte Carlo version of the continuous ranked probability score (MC-CRPS) at fixed $x$ before and after \calpit.}
  \label{fig:ex1_skewed}
\end{figure}

\subsection{Example 2: Probabilistic Nowcasting with High-Dimensional Sequence Data as Inputs} \label{sec:example_TCs}

Our next synthetic example is motivated by short-term forecasting of the intensities of tropical cyclones (TC) with high-resolution satellite images. This application is challenging both because of the high-dimensional nature of spatio-temporal satellite data and because the intensities are auto-correlated in time. Figure~\ref{fig:GOES}, right, shows an example of a 24-hour sequence $\S_{<t}$ of consecutive radial profiles (one-dimensional functions) extracted from Geostationary Operational Environmental Satellite (GOES) infrared imagery \citep{janowiak2020NOAA}.

Infrared (IR) imagery, as observed by GOES, measures the cloud top temperature, which is a proxy for the strength of convection (the key component of the mechanism through which TCs extract energy from the ocean). Hence, each computed sequence $\S_{<t}$ can be seen as a summary of the spatio-temporal evolution of the convective structure of the TC leading up to time $t$, where patterns in $\S_{<t}$ signaling strengthening/weakening convection are predictors of intensifying/weakening storms; that is, they predict changes in the intensities of the TC, $I_\tau$, for $\tau \geq t$.

\begin{figure*}[htb]	
		\begin{center}
			\includegraphics[width=0.9\columnwidth]{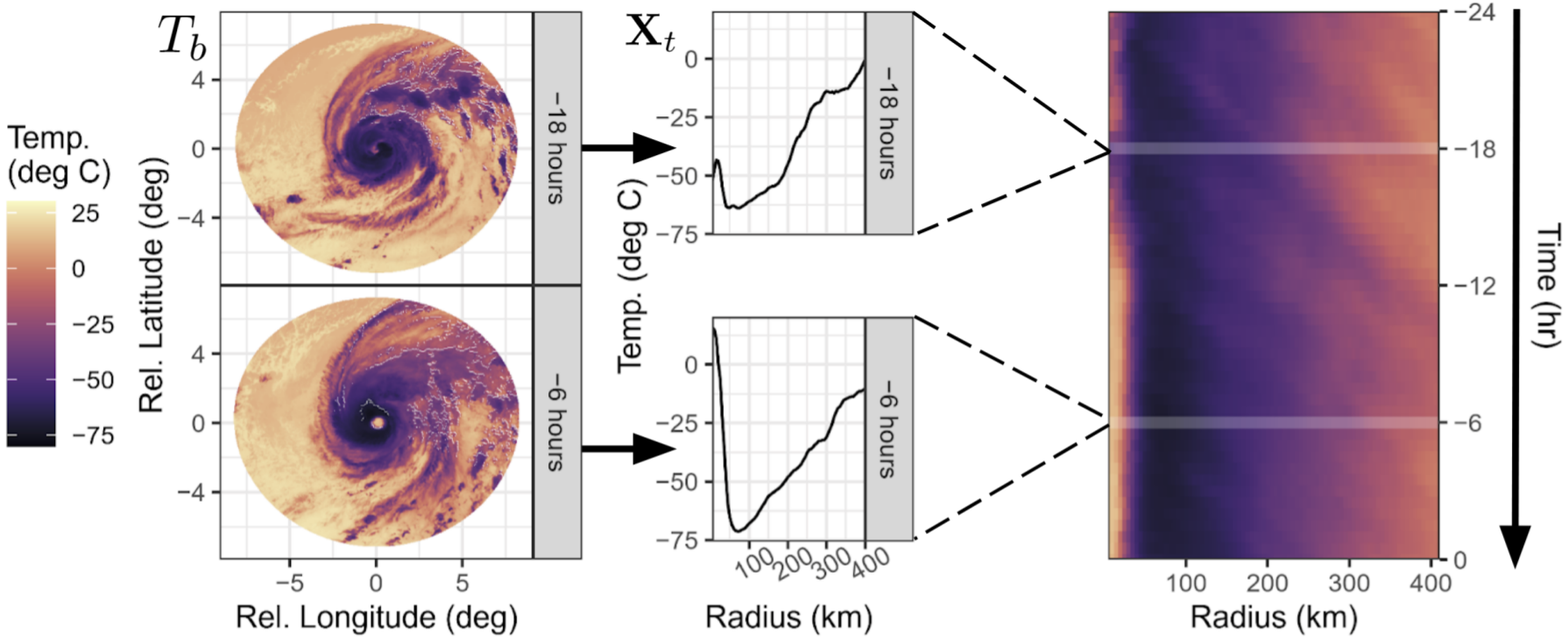}
		\end{center}	

		\caption{ 
			{\footnotesize {\bf TC satellite images.} \textit{Left:} A sequence of TC-centered cloud-top temperature images from GOES. \textit{Center:} We represent each GOES image with a radial profile of azimuthally-averaged cloud-top temperatures. \textit{Right:} The 24-hour sequence of consecutive radial profiles, sampled every 30 minutes, defines a structural trajectory $\S_{<t}$ or Hovm\"{o}ller diagram. Figure adapted from \citet{mcneely2022TCs}.}
		}\label{fig:GOES}	
\end{figure*}

As a proof-of-concept of our LADaR framework, we create a synthetic example with the same format as actual TC data. The details are described in supplementary material~\href{https://lee-group-cmu.github.io/cal-pit-paper/supplementary_material.pdf}{S3}\footnote[1]{Supplementary Materials:\\ \url{https://lee-group-cmu.github.io/cal-pit-paper/supplementary_material.pdf}}. Figure~\ref{fig:TC_data} shows an example of a simulated storm. On the left, we have a toy Hovm{\"o}ller diagram of the evolution of the ``convective structure'' $\{(\X_t)\}_{t \geq 0}$, with each row representing the radial profile $\X_t \in \mathbb{R}^{120}$ of temperature as a function of radial distance from the storm center; time evolution is top-down in hours.  On the right, we have $\{Y_t\}_{t \geq 0}$, the simulated ``TC intensities'' at corresponding times $t$. The trajectory $\S_{<t}:=(\X_{t-47}, \X_{t-46}, \ldots,\X_{t})$ represents the 24-hour history of the convective structure (48 radial profiles). We simulate 8000 ``storms'' according to a fitted TC length distribution. Sequence data $\{(\S_{<t}, Y_t)\}$ from the same storm are shifted by 30 minutes; therefore, they are strongly correlated. Sequence data from different storms, on the other hand, are independent.
\begin{figure}[H]
\begin{minipage}[ht]{0.6\columnwidth}
		\begin{center}
			\includegraphics[width=1\columnwidth]{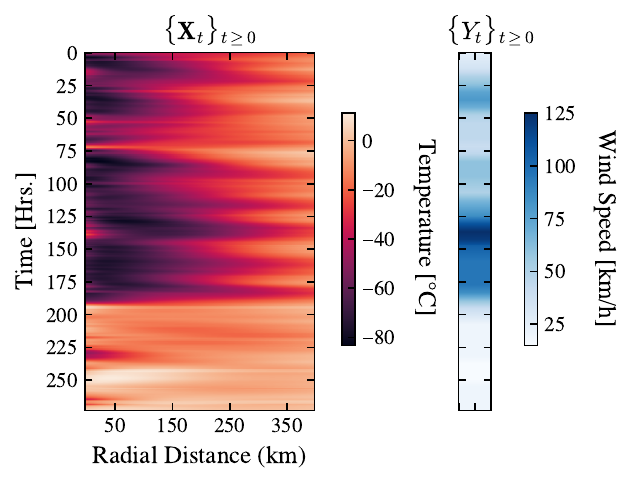}
		\end{center}
	\end{minipage}
	\begin{minipage}[ht]{0.38\linewidth} 
		\baselineskip=13pt
		\caption{
        \footnotesize {\bf Synthetic data in Example 2.}  
       Simulated radial profiles $\{\X_t\}_{t \geq 0}$ and intensities  $\{Y_t\}_{t \geq 0}$ for an example TC.
		\textit{Left:}   Each row represents the radial profile  $\X_t$ of temperature as a function of radial distance from the storm center at time $t$.    Our predictors are 48-hour overlapping sequences $\{\S_t\}_{t \geq 0}$ with data from the same ``storm'' being highly dependent.
		\textit{Right:} The target response,  here shown as a time series $\{Y_t\}_{t \geq 0}$ of simulated TC intensities.
        }
		\label{fig:TC_data}
	\end{minipage}
	
\end{figure}
Our goal is to ``nowcast'' the conditional distribution $Y_t|\S_{<t}$, where $Y_t$ is the intensity at time $t$. Here we illustrate how \calpit can diagnose and improve an initial convolutional mixture density network (ConvMDN) model. In our example, we perform training, calibration, and testing on different simulated ``storms'': First, we fit an initial CDE (ConvMDN; \citealt{disanto2018cmdn}), which estimates $f(y|\s)$ as a unimodal Gaussian, based on a train set with 8000 points, $\{(\S_{<t},Y_t)\}$ (see supplementary material~\href{https://lee-group-cmu.github.io/cal-pit-paper/supplementary_material.pdf}{S3} for details). Next, we apply \calpit  to learn $\hat r^{\hat f}(\gamma;\s)$  using 8000 calibration points.  (Note that the data within the same storm are highly dependent; hence, the effective train or calibration sample sizes are much smaller than the nominal values.) 
 Because we have access to the data-generating distribution, we can assess the performance of CDEs before and after reshaping densities by MC samples at 4,000 test points.

Figure~\ref{fig:tc_phase_space}	summarizes the results. With the LADaR framework (top panel), we are able to identify regions in a high-dimensional space of sequence data where our initial CDE of $Y_t|\S_{<t}$ is a poor fit. In the upper left panel, each point corresponds to a 24-hour structural trajectory $\S_{<t}$ or a sequence of radial profiles visualized in a reduced dimensionality space using principal component analysis (PCA); the points are color-coded by the local discrepancy score (LDS) between the initial model and the true distribution of the calibration data according to \calpit. Three specific examples of input sequences are also shown. After applying the estimated P-P map via \calpit to all CDEs, we obtain near instance-wise calibration according to an independent MC assessment (bottom panel).

\begin{figure}[H]	
		\begin{center}
			\includegraphics[trim={0 1.5cm 0 0},clip=true,width=0.78\columnwidth]{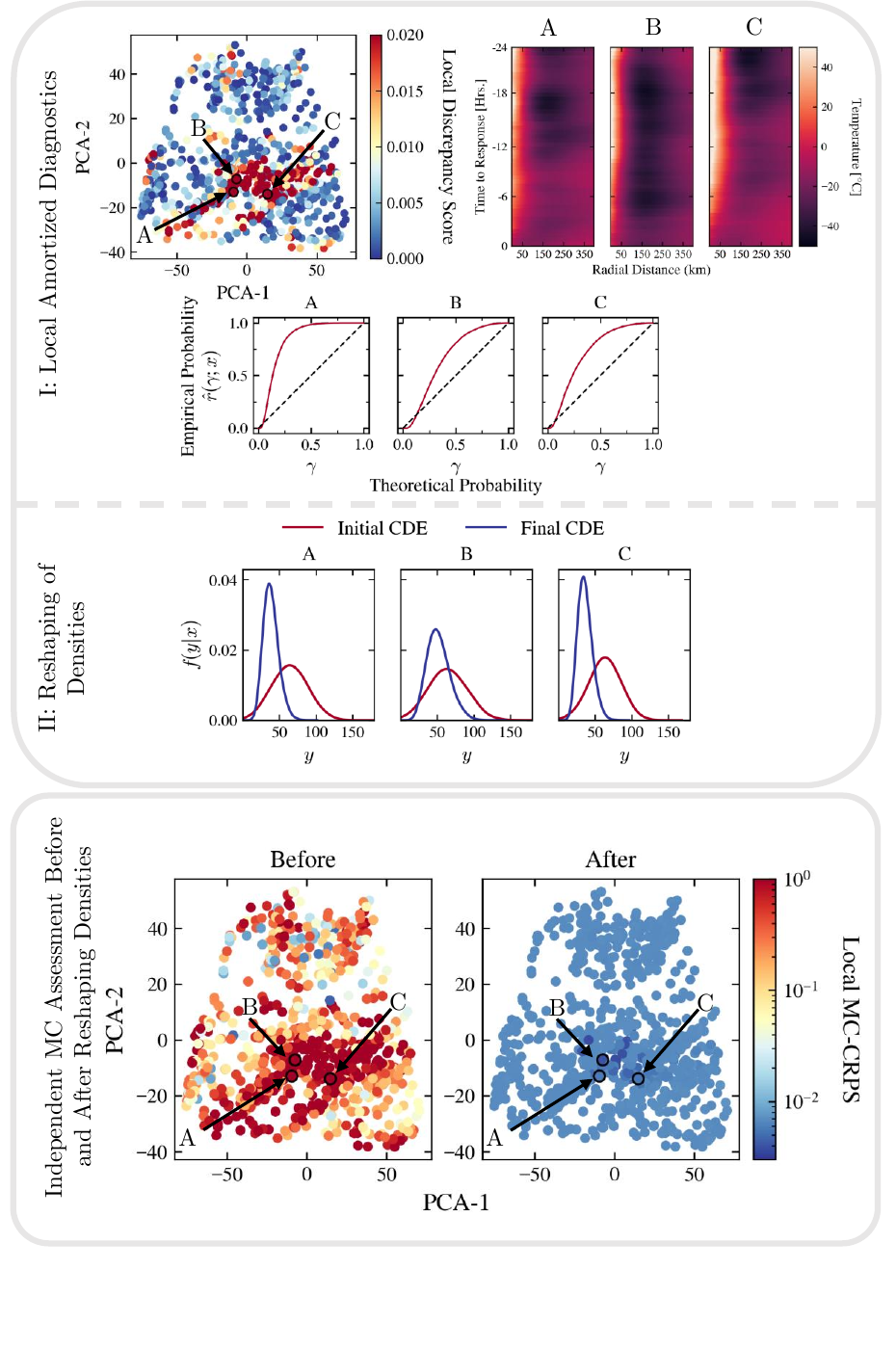}
		\end{center}	
\vspace{-0.25in}
		\caption{ 
			{\footnotesize {\bf Example 2: Probabilistic nowcasting with high-dimensional sequence data as inputs.}
            {\em Top panel I: Local Amortized Diagnostics.} First row: Two-dimensional PCA map of sequence data. One point in the map represents a 24-h structural trajectory $\S_{<t}$ or sequence of radial profiles; the points are color-coded by the local discrepancy score (LDS) between the initial model and calibration data according to \calpit. Points A--C represent three examples of inputs $\S_{<t}$ where the initial model appears to perform the worst (i.e., high LDS).  
            Second row: P-P plots help reveal the nature of the discrepancy; the initial model appears positively biased and overdispersed at the three locations. {\em Top panel II: Reshaping of densities.}  The density of $Y_t|\S_{<t}$  before (\textcolor{red}{\bf red}) and after (\textcolor{blue}{\bf blue}) applying the P-P map. {\em Bottom panel: Independent MC assessment.} For synthetic data, we can compute the continuous ranked probability score (CRPS) locally for simulated MC samples at fixed $\S_{<t}$. The local Monte Carlo CRPS scores are shown before (left) and after (right) reshaping the densities. After applying the P-P map, the CDEs are well-calibrated for all inputs $\S_{<t}$.}}
		\label{fig:tc_phase_space}	
\end{figure}

\subsection{Example 3: Prediction Sets} 

The novelty of our method lies in the fact that we can construct full CDEs with approximate instance-wise coverage. Nevertheless, as a by-product, we can also construct prediction sets with approximate conditional coverage. Given the wealth of previous literature on prediction sets, we have added an additional synthetic example in \ref{app:example_prediction_sets} to demonstrate that prediction sets derived from \calpit are competitive with sets from conformal inference and quantile regression.

\section{Main Application: Reshaping CDEs of Galaxy Photometric Redshifts} \label{sec:photo-z}

Many astrophysical studies depend on knowing the distances to external galaxies. Geometric distances to galaxies are incredibly difficult to measure, so astrophysicists typically use the redshift of light emitted from a galaxy as a proxy for its distance, where the spectral energy distribution (the intensity of light as a function of wavelength) is shifted to longer (redder) wavelengths due to the cosmological expansion of space. Redshifts can be precisely measured using spectroscopy to identify spectral features that occur at known wavelengths, but obtaining spectroscopic redshifts is resource-intensive. A far more efficient approach is to estimate redshifts from imaging data (i.e., \pzs), but even with measurements at several wavelengths, imaging data produce a less precise localization of these features (and hence more uncertain \pzs) due to a much coarser wavelength binning of photons. In particular, upcoming multi-billion dollar imaging projects like the Rubin Observatory's Legacy Survey of Space and Time (LSST; \citealt{Ivezic2019LSST}), the Nancy Grace Roman Space Telescope~\citep{Akeson2019RomanSurvey}, and the Euclid Mission~\citep{Laureijs2011EuclidSurvey} will make key cosmological measurements using weak gravitational lensing (see, e.g., \citealt{MandelbaumWLReview} for an overview), a method that relies on well-calibrated \pzs of millions of galaxies. The demands on the accuracy of \pz CDEs for these projects are extremely stringent: discrepancies in the moments of redshift distributions for samples that are instrumental in measuring cosmological parameters must be less than approximately $0.1\%$ to prevent degradation of subsequent physical analyzes \citep{DESCSRD}.

However, calibrating \pz CDEs remains tricky because galaxies span a wide range of intrinsic properties and spectral energy distributions \citep{Conroy2013SPSReview}, which leads to different combinations of redshift and intrinsic spectral energy distribution producing nearly identical observed imaging data. This problem is further complicated by measurement errors and the coarseness of the spectral information available from imaging data. Thus, the estimation of \pzs is inherently probabilistic with often non-trivial (e.g., non-Gaussian or bimodal) distributions. These distributions cannot be accurately captured by point estimates or prediction sets and must be quantified using full predictive distributions  \citep{Benitez2000BPz,Mandelbaum2008PhotozPDF,Malz2022Photoz}, which \calpit is uniquely suited to estimate.

Most \pz estimation methods fall into two main classes: physics-inspired methods that find the combination of redshift and spectral energy distribution that best matches the data (e.g., \citealt{Arnouts1999Lephare, Brammer2008EAZY}), and (ii) data-driven methods that learn a non-linear mapping between the input imaging data and redshift (e.g., \citealt{Beck2016Photoz, Zhou2021DESIPhotoz, Dalmasso2020FlexcodePhotoz, Dey2022Photoz}). No class of method is clearly the best for all imaging data sets, with the physics-based methods typically performing better when training data are sparse and the data-driven methods typically doing better when training data densely sample parameter space. Previous studies have used global metrics to reshape probability distributions (e.g., \citealt{ Euclid2021Recalibration, Kodra2023}), including PIT-based recalibration schemes (see, for instance, \citealt[Section 3]{bordoloi2010photoz}). Regardless, no method guarantees correct \textit{local} calibration of uncertainty estimates, a more stringent requirement that is the focus of \calpit.

To showcase the effectiveness of our LADaR approach, we utilize the data set from \citet{Schmidt2020Photo-z}, which has been used as a reference for assessing \pz CDE prediction techniques. This data set was developed by assigning realistic spectral energy distributions to galaxies in a dark matter-only simulation \citep{derose2019buzzard} to mimic their appearance in LSST imaging data. The input features consist of logarithmic measurements of intensity of observed galaxy light (spatially-integrated across the image) in a given wavelength range (corresponding to a photometric filter) called apparent \textit{magnitudes} and the differences between them called \textit{colors}. Additionally, uncertainty estimates for these measurements were also provided. For the \citet{Schmidt2020Photo-z} data challenge, the participants were given an unbiased ``training set'' of $\sim$44,000 instances (galaxies) to which they applied 11 different physics-inspired and data-driven \pz approaches. The \pz methods were then evaluated on an unseen ``test set'' of $\sim$400,000 instances (galaxies). For this exercise, the training set was perfectly representative of the test set. \citet{Schmidt2020Photo-z} also evaluated the performance of a method that simply predicted the marginal distribution of redshifts in the training set (i.e., the same prediction for every galaxy in the data set), which they called \texttt{trainZ}. Although this naive estimate does not contain any meaningful information about the redshift of any individual galaxy, \citet{Schmidt2020Photo-z} demonstrated that it can perform well on many commonly used metrics that check for marginal calibration.

Reassuringly, \citet{Schmidt2020Photo-z} found that \texttt{trainZ} performed very poorly on the conditional density estimate (CDE) loss \citep{izbicki2017converting}, a metric of conditional coverage. The CDE loss is a proper scoring technique and the \textit{conditional} analog of the root-mean-square-error for probabilistic regression. Given an estimate $\widetilde{f}$ of $f$, it is defined as the $L^2$ distance between $\widetilde{f}$ and $f$,
\begin{equation}
    L(f,\widetilde{f}) = \int \int [f(y|\mathbf{x})-\widetilde{f}(y|\mathbf{x}))]^{2}\mathrm{d}y\mathrm{d}P(\mathbf{x}),
\end{equation}
where $\mathrm{d}P(\mathbf{x})$ is the marginal distribution of features $\mathbf{x}$. The CDE loss cannot be evaluated directly as it depends on the unknown true density $f(y|\mathbf{x})$, but it can be estimated up to a constant ($K_{f}$, dependent on $f(y|\mathbf{x})$) by
\begin{eqnarray*}
    \widehat{L}(f,\widetilde{f}) = \E_{\mathbf{x}}\left[\int \widetilde{f}(y|\mathbf{x})^2\mathrm{d}y \right] - 2\E_{\mathbf{x},y}\left[ \widetilde{f}(y|\mathbf{x})\right] + K_{f},
\end{eqnarray*}
which is sufficient for relative comparisons between methods.
The CDE loss was the only conditional metric identified and tested in the \citet{Schmidt2020Photo-z} challenge, so we use it as our main metric for the assessment of \calpit in the context of \pzs and note that \calpit is independent of the CDE loss.

For a fair comparison, we adopt the same training and test sets from \citet{Schmidt2020Photo-z} and use the former as our calibration set to learn the local PIT-CDF . Among the methods compared by \citet{Schmidt2020Photo-z}, we use the density estimates from \texttt{GPz} \citep{Almosallam2016GPz} as our initial model. \texttt{GPz} uses sparse Gaussian processes to estimate the CDEs. Although, \texttt{GPz} produces Gaussian density estimates, it is commonly recognized that \pz conditional densities can have non-Gaussian characteristics such as long tails or bimodalities. To expand the support of the initial distributions, we took a weighted sum of the marginal distribution of redshifts in the calibration set and the \texttt{GPz} outputs  with weights 0.1 and 0.9, respectively, as our initial CDEs. We used monotonic neural networks to learn the PIT-CDF from an input feature set of one galaxy magnitude and five colors along with their measurement uncertainties. We then use the same features to diagnose and reshape the initial densities. Finally, we assess the quality of our reshaped CDEs with the CDE loss.

Figure~\ref{fig:photo-z-local} showcases how \calpit is a powerful tool for diagnosing and reshaping \pz CDEs. The top row of panel I displays a subset of the test data points in two projections (left: $u-g$ color vs.\ $i$-band magnitude; right: $r-i$ color vs.\ $z-y$ color) of feature space with the points color-coded by the local discrepancy score.  Four individual galaxies are highlighted, and their diagnostic P-P plots are shown in the second row of panel I. The first P-P plot shows an instance where the initial model was good and no substantial reshaping is necessary. The second P-P plot shows an instance where the initial guess is overdispersed, whereas the third shows an instance where the initial guess was heavily biased. The last P-P plot demonstrates a case where the P-P plot has multiple steep sections, indicating that initial model failed to express a bimodal density. 

Panel II shows the initial CDE (red), the reshaped CDE (blue), and the true redshift (dotted black line and cross).  \calpit leverages the information contained in the diagnostics (i.e., the P-P plots from panel I) to reshape the initial CDEs and even recover bimodal CDEs from unimodal input CDEs (with the true redshift being in one of the modes). Figure~\ref{fig:photo-z-local}~(bottom row) provides a clear (though not statistically rigorous) demonstration that the CDEs from \calpit are indeed meaningful. Since we do not know the ``ground truth'' distributions for this data set, we have to rely on indirect methods to assess the quality. Specifically, we use the distribution of true redshifts of other galaxies with similar imaging data. We identify those counterparts by searching for other galaxies in the test set whose magnitudes and colors (rescaled by subtracting the mean and dividing by the standard deviation for each feature) lie within a Euclidean distance of 0.5 units of our selected galaxies. Figure~\ref{fig:photo-z-local}~(bottom row) shows their redshift distribution as weighted histograms, where the weights are inversely proportional to the euclidean distance to each neighbor, together with their predicted CDEs. When CDEs are unimodal, the nearest-neighbor histograms are also unimodal with similar widths. Even more impressively, when our inferred CDEs are bimodal, the nearest-neighbor histograms show matching bimodal distributions, indicating that not only did \calpit correctly find the mode with the true redshift, but also correctly identified the other redshift solution with similar imaging properties.

Finally, Table \ref{tab:photo_z_comparison} shows that \calpit achieves a lower CDE loss than any of the methods in the LSST-DESC Photo-$z$ data challenge \citep{Schmidt2020Photo-z}. The values of the CDE loss for all methods except \calpit come from \citet{Schmidt2020Photo-z}, whereas the value for \calpit was obtained by running our algorithm on the same train and test sets. As expected, there is a major improvement in the value of the CDE loss (from $-9.93$ to $-10.80$) from our input distribution (i.e., \texttt{GPz}) to our \calpit-reshaped distributions. Moreover, \calpit outperforms all other \pz methods tested by \citep{Schmidt2020Photo-z}, including \texttt{FlexZBoost} \citep{izbicki2017converting}, which was designed to minimize the CDE loss. Although the improvement over \texttt{FlexZBoost} is not dramatic, \calpit guarantees proper calibration, which \texttt{FlexZBoost} does not. Because \calpit outperforms state-of-the-art \pz prediction methods on independent metrics while ensuring proper calibration, it is perhaps the most promising method for meeting the exacting \pz requirements of next generation imaging surveys.

\begin{figure}[H]
	\centering
	\includegraphics[trim={0 2.8cm 0 0},clip=true,width=0.8\columnwidth]{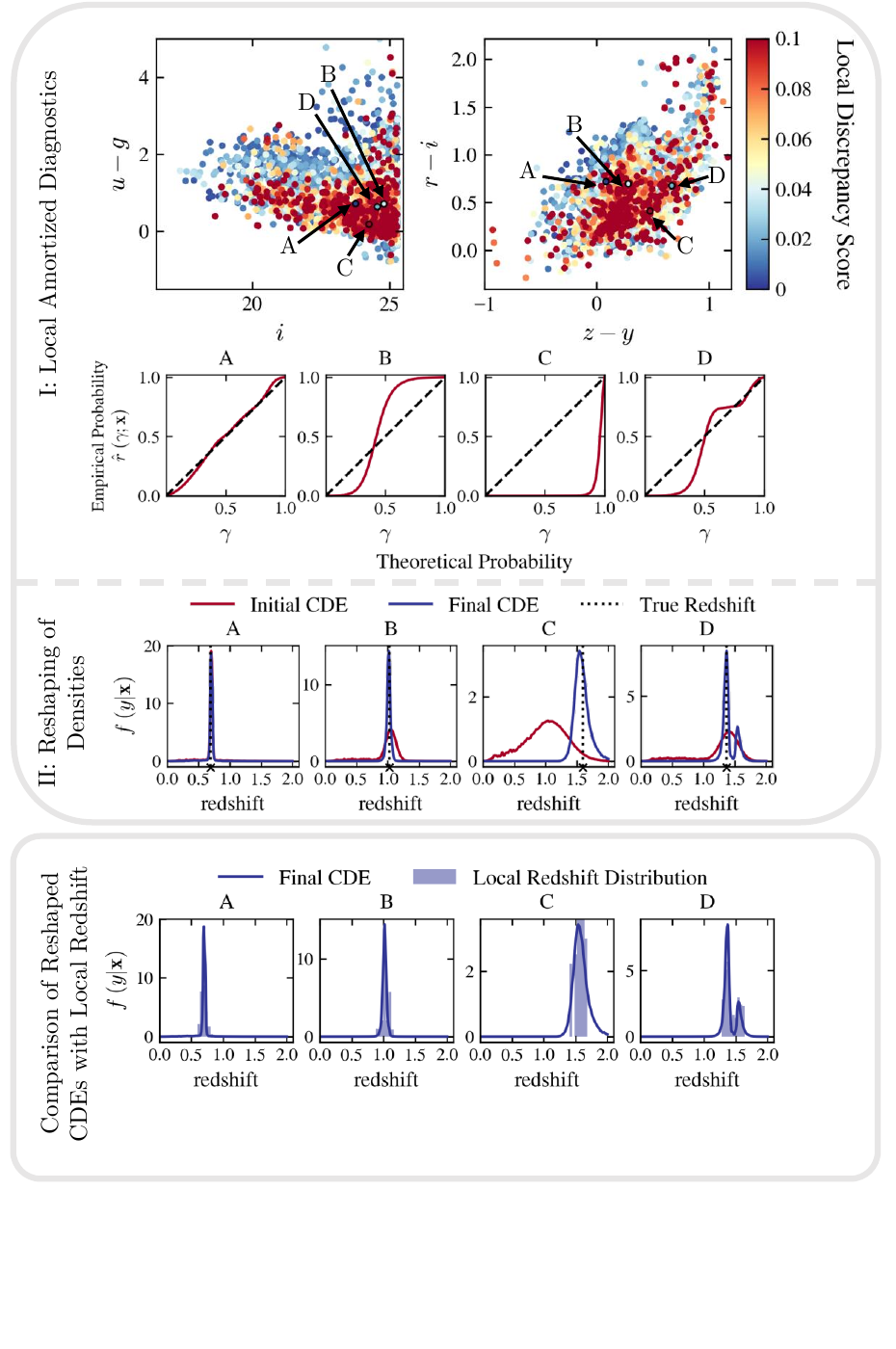}
	\caption{ \footnotesize {\bf Photo-$z$ application.} {\em Top panel I: Local amortized diagnostics.} First row: The local discrepancy score for the initial model shown in two projections of the feature space. The first figure shows galaxy $i$-band magnitude and $u-g$ color space, and the second figure shows $z-y$ color and $r-i$ color space. The points labeled A--D correspond to the four galaxies for which we show the diagnostics and reshaping. Second row: Diagnostic P-P plots of the initial model (modified \texttt{GPz} CDEs; \citealt{Almosallam2016GPz}) for four galaxies from the LSST-DESC Photo-$z$ Data Challenge \citep{Schmidt2020Photo-z} test set. {\em Top panel II: Reshaping of densities.} Photo-$z$ CDEs for the corresponding galaxies before (\textcolor{red}{\bf red}) and after (\textcolor{blue}{\bf blue}) reshaping the densities via \calpit; the true (spectroscopic) redshift is shown as a vertical dotted black line and a cross. \calpit can correct for bias and over-/under-dispersion.  Most impressively, it can recover accurate bimodal CDEs even if the initial estimate was unimodal. {\em Bottom row:} Comparison of the final reshaped CDEs (\textcolor{blue}{\bf blue} line)  with the local ``nearest-neighbor'' distribution (\textcolor{blue}{\bf blue} shaded histogram) of true redshifts of other galaxies with similar imaging properties. \calpit accurately approximates the local redshift distribution for unimodal and multimodal redshift distributions. Further, the inferred CDEs are bimodal only when the histograms are bimodal.}
	\label{fig:photo-z-local}
\end{figure}

\begin{table}[H]
    \centering
    \sisetup{detect-weight,
         mode=text, 
         table-format=2.0
        }
    \caption{\footnotesize{Comparison of the CDE loss values for \calpit and the methods benchmarked in the LSST-DESC Photo-$z$ Data Challenge~\citep{Schmidt2020Photo-z}. In terms of the CDE loss, \calpit performs better than all of the other methods tested, including \texttt{FlexZBoost}, which is specifically optimized to minimize the CDE loss.}}
    \label{tab:photo_z_comparison}
    \begin{tabular}{lS} 
    \hline
    Photo-$z$ Algorithm & \mcc{CDE Loss} \\
    \hline
    \texttt{ANNz2} \citep{Sadeh2016ANNZ2} & -6.88 \\
    \texttt{BPZ} \citep{Benitez2000BPz} & -7.82 \\
    \texttt{Delight} \citep{Leistedt2017Delight} & -8.33 \\
    \texttt{EAZY} \citep{Brammer2008EAZY} & -7.07 \\
    \texttt{\textbf{FlexZBoost}} \citep{izbicki2017converting} & \B -10.60 \\
    \texttt{GPz} \citep{Almosallam2016GPz} & -9.93 \\
    \texttt{LePhare} \citep{Arnouts1999Lephare} & -1.66 \\
    \texttt{METAPhoR} \citep{Cavuoti2017Metaphor} & -6.28 \\
    \texttt{CMNN} \citep{Graham2018CMNN}& -10.43 \\
    \texttt{SkyNet} \citep{Graff2014Skynet} & -7.89 \\
    \texttt{TPZ} \citep{CarascoKind2013TPZ} & -9.55 \\
    \texttt{trainZ} \citep{Schmidt2020Photo-z} & -0.83 \\
    \hline
    \textbf{\calpit} & \B -10.80 \\
    
    \hline
    \end{tabular}
   
\end{table}

\section{Discussion} \label{sec:discussion}
There has been a growing interest in conditional density and generative models (see \citealt{snowmassUQ} and references therein) --- however, there are few tools for assessing whether these methods yield trustworthy instance-wise UQ. 

Our proposed solution, LADaR with \calpit, draws on the success of high-capacity predictive algorithms, such as deep neural networks, to recalibrate CDEs in complex data settings with interpretable results and a minimum of assumptions.

 \calpit first assesses whether an initial conditional density model $\widehat F(\cdot|\x)$ is well calibrated for all inputs $\x$ with respect to calibration data,  and then  provides a mechanism for morphing the initial densities toward the distribution $F(\cdot|\x)$ of the reference data.  Any transformation is valid as long as both $\widehat{F}(\cdot | \x)$ and $F(\cdot | \x)$ are continuous functions and $\widehat{F}(\cdot | \x)$ dominates $F(\cdot | \x)$—that is, $\widehat{F}$ assigns positive probability to any region where $F$ does. Under these conditions (see \ref{sec:theory} for details), the recalibrated distribution is well defined, and the conditional PIT fully characterizes the conditional CDF of the target variable. This flexibility explains why a unimodal distribution can be transformed into a bimodal one, as seen in the photo-$z$ example. Our method does not impose shape constraints on the recalibrated density. \calpit also does not require exchangeability. Instead, it only requires stationarity (to ensure that the regression function remains stable over time) and a form of weak dependence (to allow the regression method to effectively learn from new data); hence, the method can be applied to (stationary) probabilistic time series forecasting. Individually calibrated CDEs automatically return conditionally calibrated prediction sets. However, \calpit works under the assumption that $Y$ is continuous and does not apply to classification tasks (unlike calibration schemes in, e.g., \citealt{Kull2019Conditional,Wald2017Conditional}).

 Although we focus on prediction problems, our approach also applies to Bayesian inference, where the goal is to estimate intractable posterior distributions $F(\theta|\x)$. This includes Simulation-Based Inference (SBI; \citealt{cranmer2020sbi}), which approximates posteriors using simulations instead of explicit likelihoods \citep{beaumont2002approximate,papamakarios2016fast,lueckmann2017flexible,greenberg2019automatic,izbicki2019abc}. \calpit can assess and recalibrate such estimates $\widehat{F}(\theta|\x)$—whether obtained via MCMC or neural methods—toward the true posterior. For implicit models like MCMC, 
 for a fixed $\x \in \mathcal{X}$ and $\theta \in \Theta$, we draw $\theta_1,\ldots,\theta_L \sim \widehat{F}(\cdot|\x)$ and approximate $\PIT(\theta; \x)$ using $L^{-1} \sum_{i=1}^L \I(\theta_i\leq \theta)$. Unlike simulation-based calibration (SBC; \citealt{talts2018validating}), which focuses on marginal validity, \calpit enables instance-wise recalibration and reveals local failure modes. Recent methods \citep{linhart2024c2st,torres2024model,wehenkel2024addressing} also offer local diagnostics or data-driven calibration, but \calpit uniquely combines feature-space interpretability with an amortized probability-probability map to correct individual CDEs. 
 
Finally, \calpit can potentially be extended to multivariate output vectors $\Y$ by the decomposition $f(\y|\x)=\prod_{i} f(y_i|\x,\y_{<i})$; thus performing \calpit corrections on autoregressive components of the conditional distribution. This is a particularly promising direction for deep autoregressive generative models \citep{van2016conditional,van2016pixel,vaswani2017Transformers,AutoregressiveDiffusion}. We are currently investigating whether \calpit can improve structural forecasts for short-term tropical cyclone intensity guidance \citep{mcneely2023structural}. We are also using a LADaR approach to quantify the {\em added} value in leveraging ML-methods and GOES imagery in TC track forecasting relative Numerical Weather Prediction operational forecasts.  See recent work by \citet{linhart2022validation} for a multivariate extension of \calpit specific to normalizing flows. Other open problems include fast sampling from recalibrated conditional distributions to generate ensemble forecasts in real time.

\ack
The authors would like to thank Tria McNeely for helpful discussions and for preparing the tropical cyclone data that were used to fit the TC-inspired model. ABL thanks Jing Lei and Larry Wasserman for valuable comments on the P-P map. BD is a postdoctoral fellow at the University of Toronto in the Eric and Wendy Schmidt AI in Science Postdoctoral Fellowship Program, a program of Schmidt Sciences. This research used resources of the National Energy Research Scientific Computing Center, a DOE Office of Science User Facility supported by the Office of Science of the U.S.\ Department of Energy under Contract No.\ DE-AC02-05CH11231 using NERSC award HEP-ERCAP0022859. This work is supported in part by NSF DMS-2053804, NSF PHY-2020295, and the C3.ai Digital Transformation Institute. The efforts of BD and JAN were supported by grant DE-SC0007914 from the U.S.\ Department of Energy Office of Science, Office of High Energy Physics. BD, BHA, and JAN acknowledge the support of the National Science Foundation under Grant No.\ AST-2009251. Any opinions, findings, conclusions or recommendations expressed in this material are those of the author(s) and do not necessarily reflect the views of the National Science Foundation. RI is grateful for the financial support of CNPq (422705/2021-7 and 305065/2023-8) and FAPESP (2023/07068-1).

\appendix
\providecommand{\upGamma}{\Gamma}
\providecommand{\uppi}{\pi}

\clearpage

\section{Example 1: Synthetic Example (Kurtotic Setting)} \label{app:example_kurtotic}

Figure~\ref{fig:ex1_kurtotic}  presents the LADaR approach and the results for the ``kurtotic'' setting in Example 1. The data are drawn from the sinh-arcsinh normal distribution and follow
$ Y_B | X \sim \textrm{sinh-arcsinh}(\mu=X, \sigma=2, \gamma=0, \tau=1-X/4).$
The initial model is Gaussian given by
$ Y|X \sim \mathcal{N}(\mu=X, \sigma=2),$
and we learn the PIT-CDF function $\widehat r^{f_0}(\gamma;\x)$ from a calibration set of $n=10000$ pairs of $(X,Y)$.

 \begin{figure}[H]
		\hspace*{-0.0cm}
		\centering
	
        \includegraphics[trim={0 3cm 0 0},clip=true,width=0.65\columnwidth]{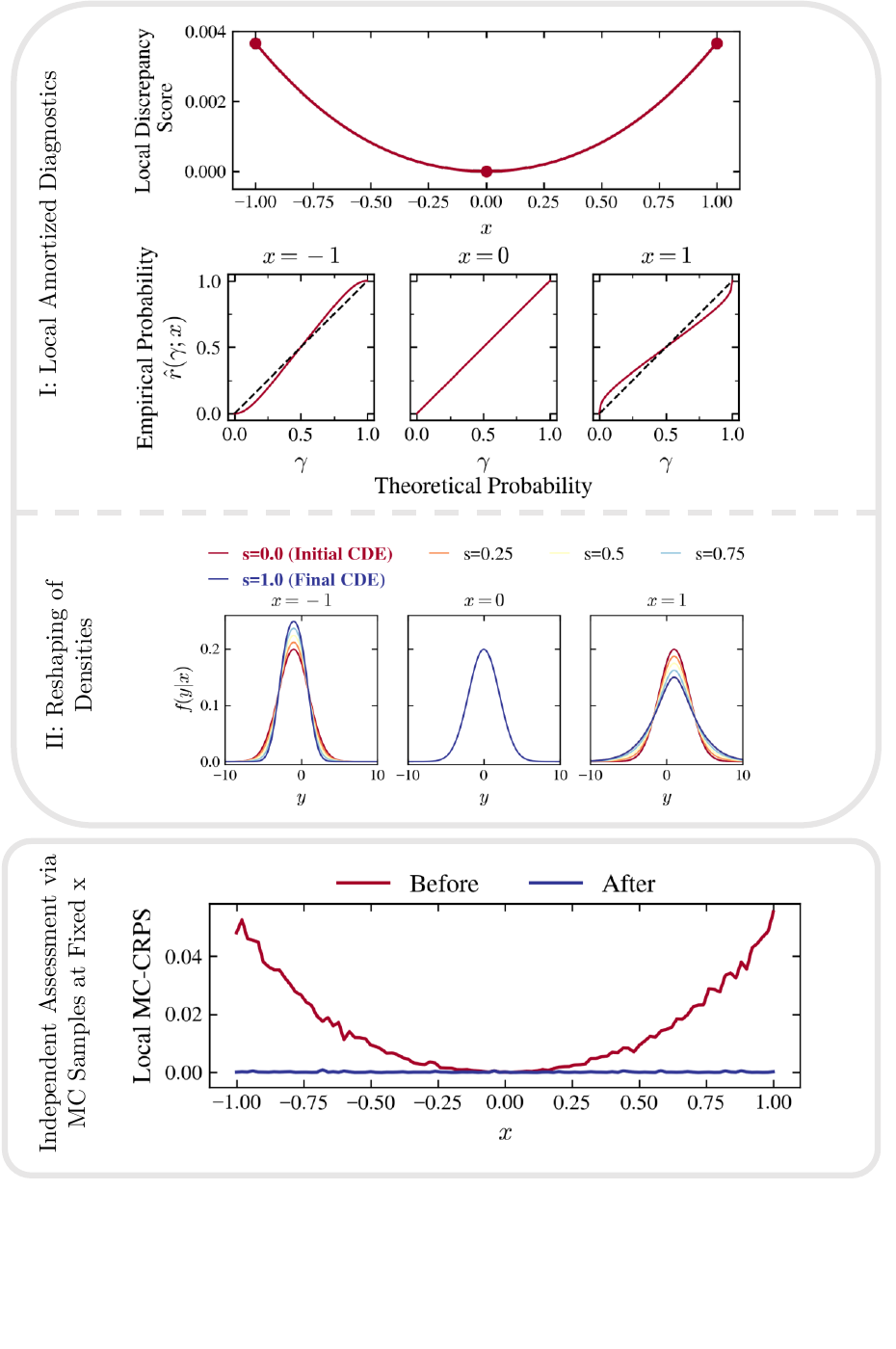}
  
		\caption{\footnotesize {\bf Illustration of LADaR framework: Example 1 kurtotic data.} Initial CDE is Gaussian, but the true distribution is kurtotic.  {\em Top panel I:} Local discrepancy score across the input space (first row) and examples of diagnostic P-P plots (second row). \calpit identifies that the model is {\em over-/under-dispersed} relative to calibration data at $X =−1$ / $X =1$ but well-estimated at $X=0$. The diagnostics define a family of P-P maps for reshaping the initial densities so as to fit the calibration data across the feature space. {\em Top panel II:} Continuous morphing of densities via \calpit, illustrated at the three evaluation points, from the initial Gaussian distributions (\textcolor{red}{\bf red}; $s=0$) to the final distributions (\textcolor{blue}{\bf blue}; $s=1$).  For illustrative purposes, we have included intermediate values of $s$ to show the morphing of distributions. {\em Bottom panel:} Independent assessment of final results by computing a local Monte Carlo version of the continuous ranked probability score (MC-CRPS) at fixed $x$ before and after \calpit.} \label{fig:ex1_kurtotic}
\end{figure}

\section{Example 3: Prediction Sets} \label{app:example_prediction_sets}

\calpit's uniqueness stems from its ability to generate complete CDEs with approximate instance-wise coverage. Additionally, it enables the creation of prediction sets with approximate conditional coverage. Considering the extensive literature on prediction sets, we have included an additional example to demonstrate that prediction sets obtained from \calpit can effectively compete with those derived using methods such as conformal inference or quantile regression. We also include a comparison with normalizing flows, as they have gained popularity for density estimation in the physical sciences. 

In \pz estimation,  multiple widely different distances (redshifts) can be consistent with the observed features (colors) of a galaxy. As mentioned previously, this results in conditional distributions that are multi-modal in parts of the feature space. Motivated by the \pz application, we have modified the two-group example of \citet{feldman2021orthogQR} to have bimodal structure due to limited predictor information. Here the target variable $Y$ depends on three variables: $X_0, X_1, X_2$. Variable $X_0$ indicates group membership but it is not measured; that is, $X_1$ and $X_2$ are our only predictors. The missing membership information results in the CDE $f(y|x_1, x_2)$ being bimodal in the regime $X_{1}>0$ with one branch corresponding to each class. Supplementary material \href{https://lee-group-cmu.github.io/cal-pit-paper/supplementary_material.pdf}{S2}\footnote[1]{Supplementary Materials: \url{https://lee-group-cmu.github.io/cal-pit-paper/supplementary_material.pdf}} details the data-generating process (DGP), and Figure~\ref{fig:ex_1_data} visualizes one random instance of data drawn from  $f(y|x_1, x_2)$ with the ``majority'' and ``minority'' groups displayed as \textcolor{blue}{blue} versus \textcolor{red}{red} points.

We design two experiments for benchmarking \calpit prediction sets against results from conformal inference, quantile regression, and normalizing flows:
\begin{itemize}
 \item Experiment 1 (comparison with conformal inference):  For this experiment, we split a sample of total size $n$ in two halves: the first half is used to train an initial model, and the second half is used for calibration. The empirical coverage of the final prediction sets are computed via  $1000$ MC simulations from the true DGP at each test point on a grid. Test points with coverage within two standard deviations (SD) of the nominal coverage of $1-\alpha=0.9$ based on 100 random realizations are labeled as having ``correct'' coverage. We report the proportion of test points in the feature space with ``under-,'' ``correct,'' and ``over-'' coverage.
  \item  Experiment 2 (comparison with quantile regression and normalizing flows): Here we use the entire sample of size $n$ to compute quantiles or to estimate the conditional density. As above, we use MC simulations on a grid to assess conditional coverage. 
\end{itemize}

The top row of Figure~\ref{fig:ex1_coverage} shows results for Experiment 1. We compare $90\%$ prediction sets for $Y$ using \texttt{Cal-PIT (INT)} and \texttt{Cal-PIT (HPD)} (defined by Equations~\ref{eq:CalPIT_int} and~\ref{eq:CalPIT_HPD}, respectively) with prediction sets from \texttt{Reg-split} \citep{lei2018distribution}, conformalized quantile regression (\texttt{CQR}; \citealt{romano2019conformalizedQR}), and distributional conformal prediction (\texttt{DCP}; \citealt{chernozhukov2021distributional}).  \texttt{Reg-split}
and \texttt{CQR} are trained with XGBoost~\citep{Chen2016XGBoost}. Our \calpit methods use
an initial CDE trained using FlexCode with an XGBoost regressor \citep{izbicki2017converting,Dalmasso2020FlexcodePhotoz} and monotonic neural networks~\citep{Wehenkel2019UMNN} for learning $\widehat r^{\widehat f}(\gamma;\x)$ with binary cross entropy loss. \texttt{DCP} computes a conformal score based on PIT values derived from the same initial CDE as \calpit.  In terms of conditional coverage, all methods improve with increasing sample size, but only \calpit consistently attains the nominal $90\%$ coverage across the feature space for $n\geq2000$. As the data distribution can sometimes be bimodal, the most efficient prediction sets in this feature subspace would not be single intervals (INT), but rather pairs of intervals. We can create such disjoint prediction sets using Highest Predictive Density regions (HPD; see~\ref{app:cal_hpd} for definition).

The bottom row of Figure~\ref{fig:ex1_coverage} shows results for Experiment 2. \texttt{Cal-PIT (INT)} and \texttt{Cal-PIT (HPD)}  reshape a uniform distribution on $\x \in [-5,5]$; hence, there is no need for a separate training set. The \calpit prediction sets are then compared to output from quantile regression (\texttt{QR}; \citealt{koenker1978QR}) trained with XGBoost and a pinball loss,  orthogonal quantile regression (\texttt{OQR}; \citealt{feldman2021orthogQR}) which introduces a penalty on the pinball loss to improve conditional coverage, and normalizing flows (\texttt{NF}). We use the \texttt{PZFlow} \citep{pzflow} implementation of Normalizing Flows which has been optimized to work well out-of-the-box with tabular data and uses Neural Spline Flows \citep{splineFlow1,Spline2,spline3} as the backbone.

\begin{figure}[h!]
    \centering
    \includegraphics[width=0.8\columnwidth]{./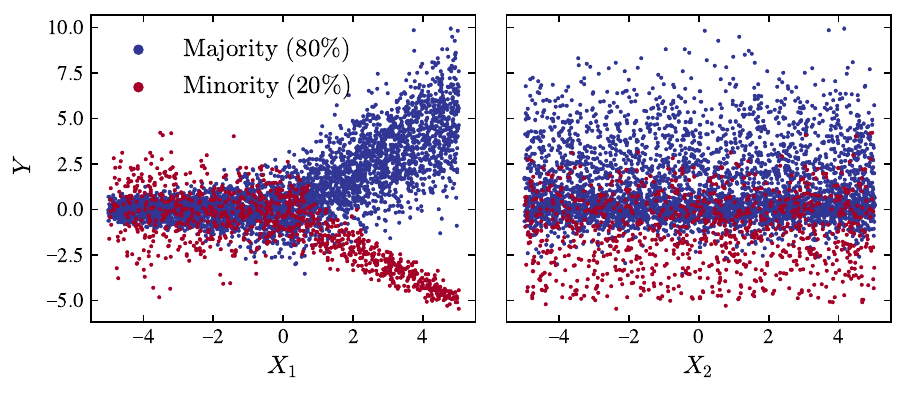}
    \caption{ \small
    Visualization of one random instance of the data used for Example 1. There are two covariates ($X_1, X_2$), and a target variable $Y$. The analytic form of the true data distribution is defined in supplementary material~\href{https://lee-group-cmu.github.io/cal-pit-paper/supplementary_material.pdf}{S2}. The data set consists of two groups with different spreads. $Y$ splits into two branches for $X_{1}>0$;  that is, the true CDE is bimodal in this  region.
    \label{fig:ex_1_data}}
\end{figure}

Figure~\ref{fig:ex1_PDFs}, top row, shows some examples of calibrated CDEs  from \calpit. The estimates reveal that the true conditional density is bimodal for $X_1 > 0$;  thus, the most efficient prediction sets in this feature subspace would be HPD regions. Indeed, \texttt{Cal-PIT (HPD)} yields smaller prediction sets than \texttt{Cal-PIT (INT)}; see Figure~\href{https://lee-group-cmu.github.io/cal-pit-paper/supplementary_material.pdf}{S1} in supplementary material. Because HPD sets can capture the bimodality in the data while intervals cannot, this is a case where \texttt{Cal-PIT (HPD)} has better efficiency. This qualitative insight is only possible because \calpit estimates the entire PDs. Normalizing flows also provide entire CDEs (see Figure~\ref{fig:ex1_PDFs}, bottom row) but can be difficult to train. Indeed, the normalizing flow CDEs generally deviate significantly from the oracle.

\begin{figure}[htb]
\centering
    \begin{subfigure}{\columnwidth}
    \includegraphics[width=0.48\columnwidth]{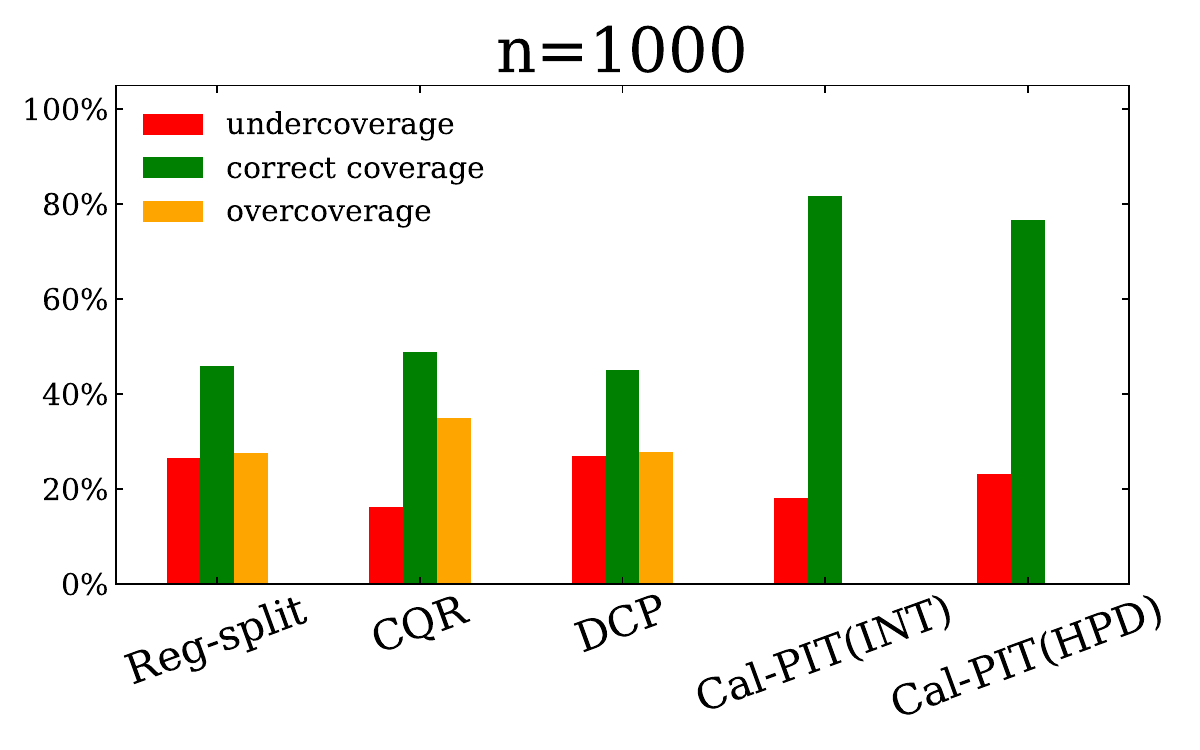}
     \includegraphics[width=0.48\columnwidth]{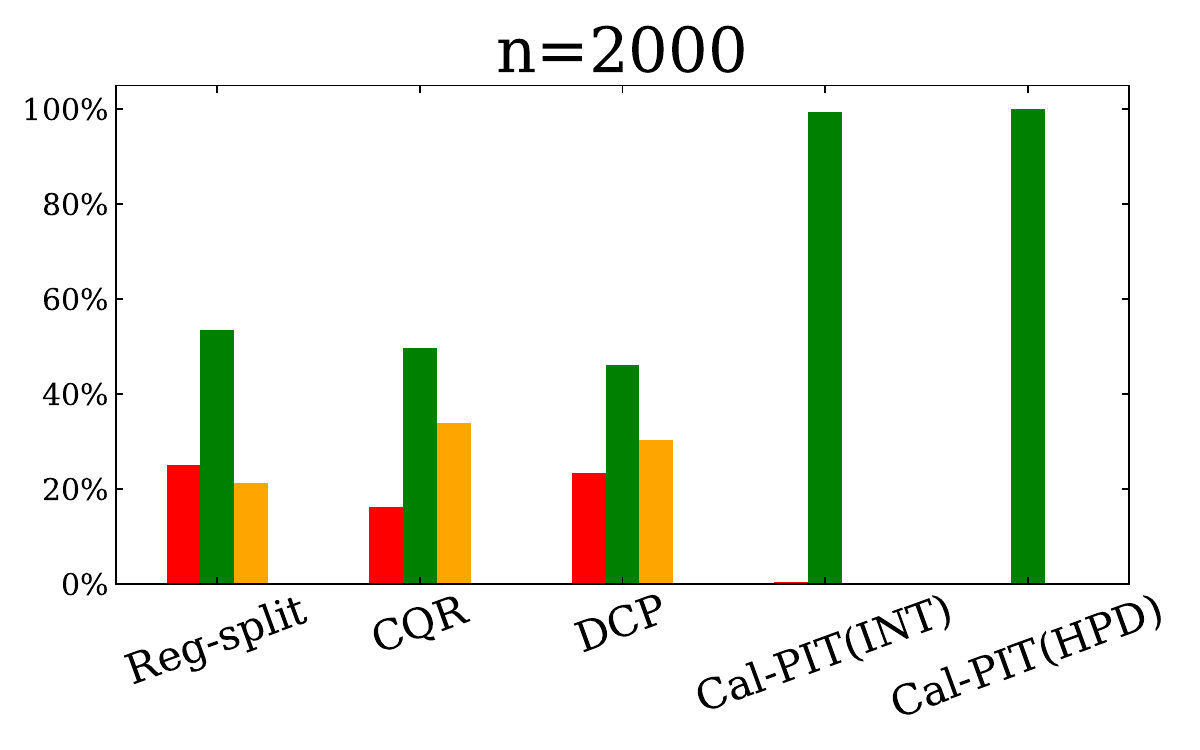}
     \caption{}
     \label{fig:ex_1_coverage_a}
    \end{subfigure}
    
    \begin{subfigure}{\columnwidth}
    \includegraphics[width=0.48\columnwidth]{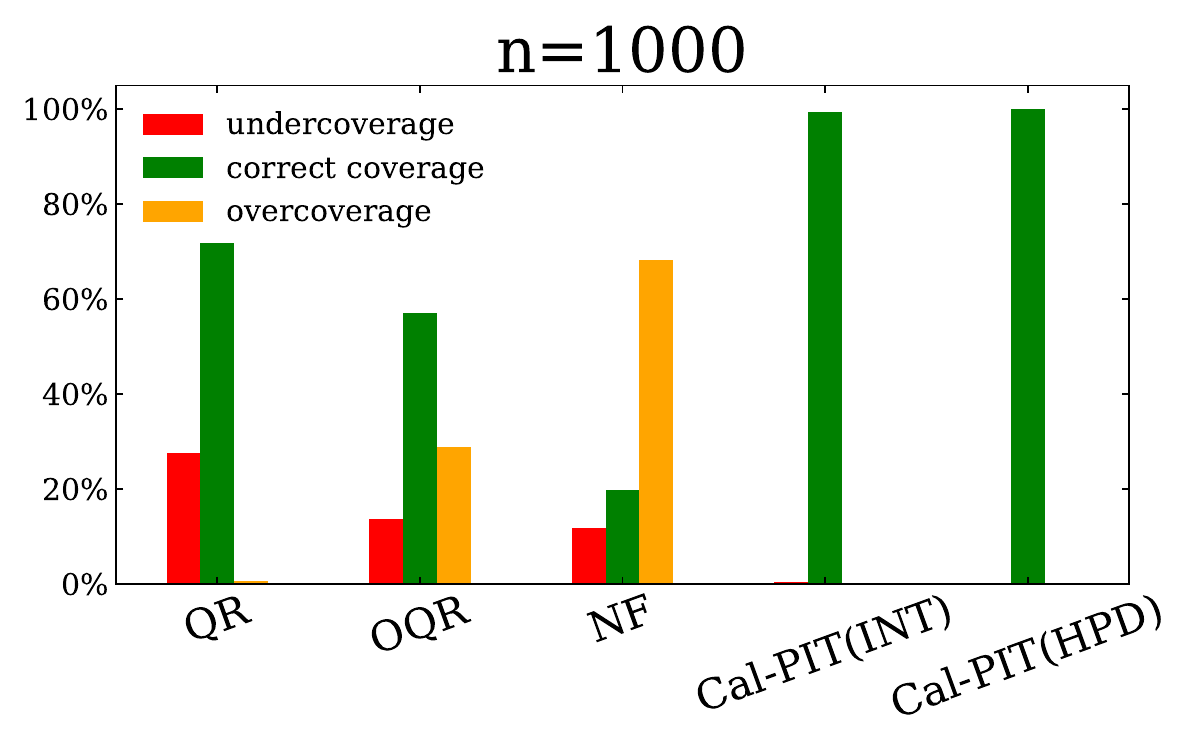}
    \includegraphics[width=0.48\columnwidth]{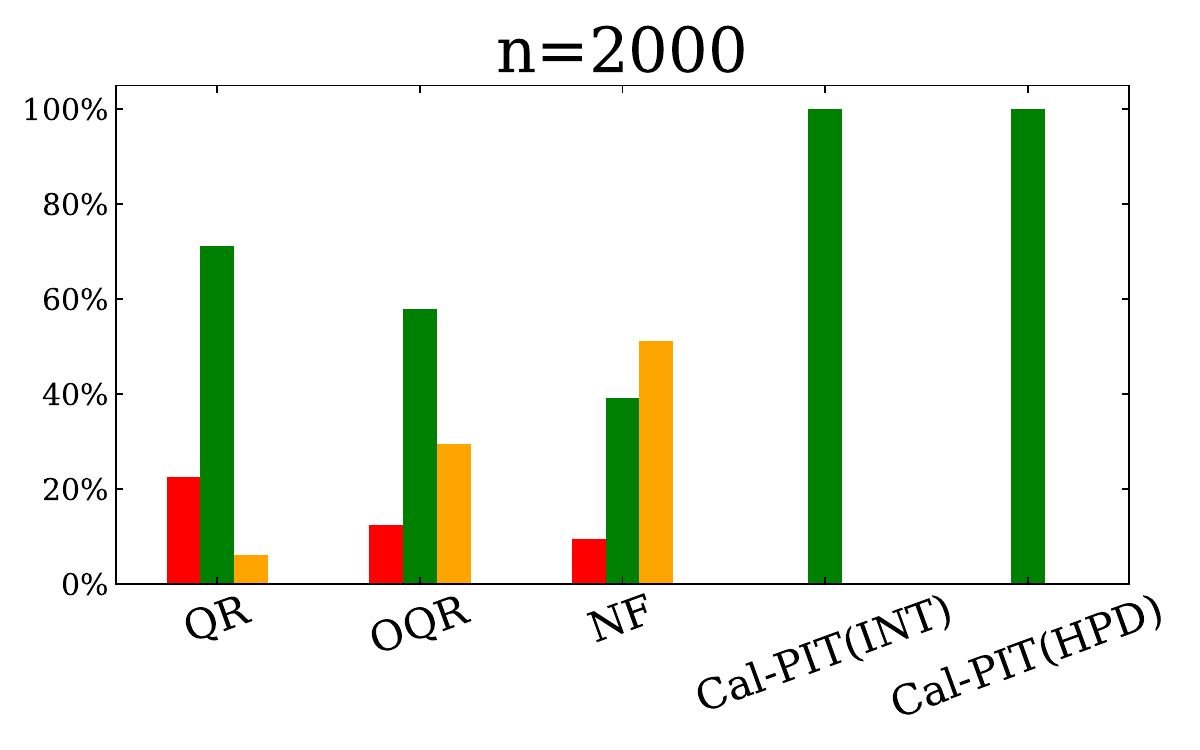}
    \caption{}
    \label{fig:ex_1_coverage_b}
    \end{subfigure}
 
	\caption{\small The proportion of test points with correct conditional coverage for (a) ``Experiment 1'' with state-of-the-art conformal inference methods, using data of total size $n$ split into a train and a calibration set, and (b) ``Experiment 2'' with quantile regression and normalizing flow approaches, which use all data for training. See text for details. Only \calpit  consistently attains the nominal $90\%$ coverage across the feature space with increasing sample size $n$. 
	} 
	\label{fig:ex1_coverage}
\end{figure}

\begin{figure}[htb]
\begin{subfigure}{\columnwidth}
    \includegraphics[width=\columnwidth, right]{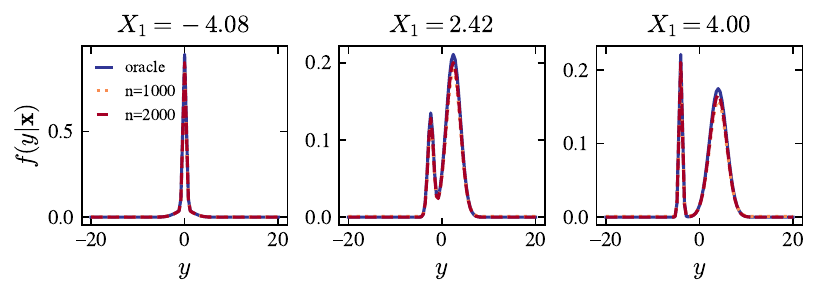}
    \caption{CDEs from \calpit}
\end{subfigure}
\vspace{1cm}
\begin{subfigure}{\columnwidth}
    \includegraphics[width=\columnwidth, right]{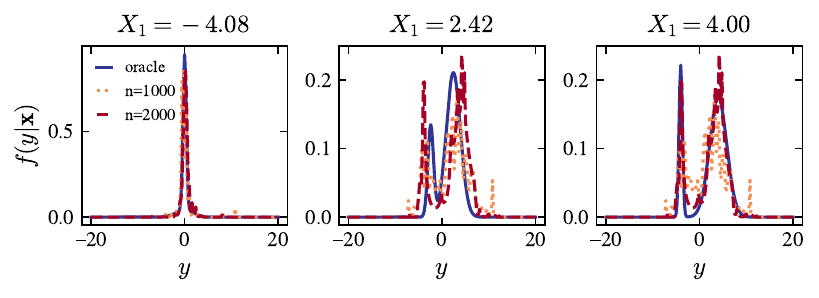}
    \caption{CDEs from Normalizing Flows}
\end{subfigure}
	\caption{ \small CDEs at three different values of $X_1$ ($X_2=0$) for (a) \calpit and (b) Normalizing Flows for  Example 3. The results for $n=1000$ and $n=2000$ are compared to the ``oracle'' probability density functions. For both sample sizes, the \calpit CDEs are close to the oracle. Normalizing flow CDEs, on the other hand, are harder to train and a standard implementation can deviate significantly from the oracle.}
\label{fig:ex1_PDFs}
\end{figure}
\clearpage

\section{Local CRPS Scores}\label{appendix:local_CRPS}
The conditional expectation of the CRPS loss given $\X = \x$ is
$$  \E \left[ L_{\mathrm{CRPS}}(\widetilde{f};\X,Y)\big|\x\right] = \E \left[\int_{-\infty}^{\infty} \left( \widetilde{F}(t|\x) -F(t|\x) + F(t|\x) - \I(Y \leq t) \right)^2 dt \Big|\x \right].
$$
By expanding the square and by changing the order of expectation and integration, we have:
\begin{eqnarray*}
    \E \left[ L_{\mathrm{CRPS}}(\widetilde{f};\X,Y)|\x\right] &=\E \left[\int_{-\infty}^{\infty} \left( \widetilde{F}(t|\x) -F(t|\x) \right)^2 dt \mid \x\right] \\
    &+2\int_{-\infty}^{\infty} \left( \widetilde{F}(t|\x) -F(t|\x)\right)\E \left[\left(F(t|\x) - \I(Y \leq t) \right)dt|\x\right] \\
    &+
    \int_{-\infty}^{\infty} \E \left[ \left(F(t|\x) - \I(Y \leq t) \right)^2|\x\right] dt. 
\end{eqnarray*}
Note that:
\begin{itemize}
    \item The first term represents the squared distance between $\widetilde{F}$ and $F$ and is minimized when $\widetilde{F}(\cdot|\x) = F(\cdot|\x)$.
    \item 
    The second term equals zero,
    $$\E \left[ F(t|\x) - \I(Y \leq t) \big|\x \right] = F(t|\x)-\E \left[  \I(Y \leq t) \big|\x \right]=F(t|\x)-F(t|\x)=0.$$
    \item The third term is a constant that does not depend on $\widetilde{F}$.
\end{itemize}
Thus, 
\begin{eqnarray*}
    \E \left[ L_{\mathrm{CRPS}}(\widetilde{f};\X,Y)\big|\x\right] &= \int_{-\infty}^{\infty} \left( \widetilde{F}(t|\x) -F(t|\x) \right)^2 dt + K \\
    &\approx  \int \left(\widetilde F(t|\x) - \frac{1}{B} \sum_{b=1}^B I(Y_b<t) \right)^2dt + K \\
   & =  L_{\mathrm{MC-CRPS}}(\widetilde{f};\x, f)+ K ,
\end{eqnarray*}
where $K$ does not depend on $\widetilde{F}$.

\section{\texttt{Cal-PIT} (HPD) and \texttt{Cal-HPD}} \label{app:cal_hpd}

 Here we describe two approaches to deriving prediction sets (instead of prediction intervals) from an estimate of the conditional distribution function $f(y|\x)$.

\subsection{\texttt{Cal-PIT} (HPD)}
\texttt{Cal-PIT} can also be used to compute Highest Predictive Density regions (HPDs)  instead of prediction intervals.
The oracle (1-$\alpha$)-level HPD set is defined as
$$\textrm{HPD}_\alpha(\x)=\{y:   f(y|\x)\geq  t_{\x,\alpha}\},$$
	  where $ t_{\x,\alpha}$ is such that 
	  $\int_{y \in \textrm{HPD}_\alpha(\x)}  f(y|\x)dy=1-\alpha$.
	   HPDs are the smallest prediction sets that have coverage $1-\alpha$, and thus they may be more  precise (smaller set size) than quantile-based intervals,  while maintaining the conditional coverage at the nominal level (see \ref{app:example_prediction_sets}
       for an example with a bimodal predictive distribution). 

The \texttt{Cal-PIT} estimate of 
$\textrm{HPD}_\alpha(\x)$ is given by
 \begin{equation} \label{eq:CalPIT_HPD}
     C_\alpha(\x)=\{y: \widetilde f(y|\x)\geq \widetilde t_{\x,\alpha}\},
\end{equation}
	  where $\widetilde t_{\x,\alpha}$ is such that 
	  $\int_{y \in C_\alpha(\x)} \widetilde f(y|\x)dy=1-\alpha$
	  and $\widetilde f$ is the 
	 \texttt{Cal-PIT} calibrated CDE (Algorithm \ref{alg:CalPIT}).\\

\subsection{\texttt{Cal-HPD}}

 Alternatively, one can directly use HPD values, defined as 
\begin{eqnarray*}
\hat H(y; \x) := \int_{\left\{y: \hat{f}(y'|\x) \leq \hat{f}(y|\x)\right\}} \hat{f}(y'|\x) dy',
\end{eqnarray*}
 to recalibrate  HPD prediction sets  (rather than using PIT values). The idea is to 
estimate the local HPD coverage at each $\x$,
$h^{\hat f}(\gamma;\x) := \pr(\hat H(Y;\x) \leq \gamma | \x),$ 
by regression, analogous to  estimating the PIT-CDF in \texttt{Cal-PIT}.
Let  $\hat h^{\hat f}(\gamma;\x)$ be such an estimate. The recalibrated $(1-\alpha)$-level HPD set at a location $\x$ is given by the  $(1-\alpha^*(\x))$-level HPD set of the original density $\hat f(y|\x)$, where 
$\alpha^*(\x)$ is such that $\hat h^{\hat f}(\alpha^*(\x);\x) = \alpha$. This framework however does  not  yield full CDEs. Moreover, although  the approach corrects HPD sets, aiming for conditional coverage, the constructed sets will not be optimal if the initial model $\widehat f$ is far from the true data generating process $f$.\\

In  Example 3 (\ref{app:example_prediction_sets}), we only report results for \texttt{Cal-PIT}(INT) and \texttt{Cal-PIT}(HPD); we do not report results for \texttt{Cal-HPD}.\\

\section{Theoretical Properties of \texttt{Cal-PIT}}
\label{sec:theory}

We here describe the assumptions needed for Theorem \ref{thm:performance_PD}, 
and provide convergence rates. We also prove  
that \calpit intervals achieve asymptotic conditional validity even if the initial CDE $\hat{f}$ is not consistent.
The following results are conditional on  $\hat{f}$; all uncertainty refers to the calibration sample. 
We assume in Theorem \ref{thm:performance_PD} that the true distribution of $Y|\x$ and its initial estimate are continuous, and that
  $\widehat F$ places its mass on a region that is at least as large as  that of $F$:
\begin{Assumption}[Continuity of the cumulative distribution functions]
	\label{assump:continuity}
	For every $\x \in \mathcal{X}$, $\widehat F(\cdot |\x)$ and $F(\cdot |\x)$ are  continuous  functions.
\end{Assumption}

\begin{Assumption}[$\widehat F$ dominates $F$]
    \label{assump:dominates}
    	For every $\x \in \mathcal{X}$, $\widehat F(\cdot |\x)$ dominates $F(\cdot|\x)$.
\end{Assumption}

We also assume that  $F(\cdot|\x)$ cannot place too much mass in regions where the initial estimate 
$\widehat F(\cdot|\x)$
 places little mass:
\begin{Assumption}[Bounded density]\label{assump:bounded}
There exists $K>0$ such that, for every $\x \in \mathcal{X}$, the Radon-Nikodym derivative of $F(\cdot | \x)$ with respect to $\widehat F(\cdot | \x)$ is bounded above by   $K$.
\end{Assumption}

To provide rates of convergence for the recalibrated CDE, we will in addition assume that the regression method converges at a rate $O(n^{-\kappa})$:
\begin{Assumption}[Convergence rate of the regression method]\label{assump:convergence_rate} \label{assump:mse}
The regression method used to estimate $r^{\widehat f}$ is such that its convergence rate is given by
$$\E \left[ \int \int \left(\widehat r^{\widehat f}(\gamma;\x)-r^{\widehat f}(\gamma;\x) \right)^2 d\gamma dP(\x) \right]=O\left(\frac{1}{n^\kappa}\right)$$
for some $\kappa>0$.
\end{Assumption}

Many methods  satisfy Assumption \ref{assump:convergence_rate} for some value
$\kappa$, which is typically rated to the dimension of $\mathcal{X}$ and the smoothness of the true regression $r$ (see for instance \citealt{gyorfi2002distribution}).

Under these assumptions, we can derive the rate of convergence for $\widetilde F$:
\begin{Corollary}[Convergence rate of recalibrated CDE]
\label{cor:rate}
Under Assumptions  \ref{assump:continuity}, \ref{assump:dominates}, \ref{assump:bounded} and \ref{assump:convergence_rate},
\begin{equation}
\E \left[ \int \int \left(\widetilde F(y|\x)-F(y|\x) \right)^2  dP(y,\x)\right]=O\left(\frac{1}{n^\kappa}\right).
\end{equation}
\end{Corollary}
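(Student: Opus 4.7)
The plan is to derive the corollary as a direct consequence of Theorem \ref{thm:performance_PD} combined with Assumption \ref{assump:convergence_rate}. Since Theorem \ref{thm:performance_PD} has already established that, under Assumptions \ref{assump:continuity}, \ref{assump:dominates} and \ref{assump:bounded},
\[
\E \left[ \int \int \left(\widetilde F(y|\x)-F(y|\x) \right)^2 dP(y,\x)\right] = K\,\E \left[ \int \int \left(\widehat r^{\widehat f}(\gamma;\x)-r^{\widehat f}(\gamma;\x) \right)^2 d\gamma\, dP(\x) \right],
\]
the only remaining step is to control the right-hand side.

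Specifically, I would invoke Assumption \ref{assump:convergence_rate} directly: by hypothesis, the regression procedure used to estimate $r^{\widehat f}$ has mean squared error of order $O(n^{-\kappa})$ for some $\kappa>0$. Substituting this bound into the right-hand side of the identity above yields
\[
\E \left[ \int \int \left(\widetilde F(y|\x)-F(y|\x) \right)^2 dP(y,\x)\right] = K \cdot O\!\left(\frac{1}{n^\kappa}\right) = O\!\left(\frac{1}{n^\kappa}\right),
\]
where the constant $K$ from Assumption \ref{assump:bounded} is absorbed into the big-$O$ constant. This establishes the claim.

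Because the corollary is really just a substitution, there is no significant obstacle beyond checking that the hypotheses needed for Theorem \ref{thm:performance_PD} (namely Assumptions \ref{assump:continuity}, \ref{assump:dominates} and \ref{assump:bounded}) are part of the corollary's hypotheses, which they are. The genuine technical content, i.e.\ the transfer of MSE from the PIT-CDF regression to the recalibrated predictive distribution, is carried entirely by Theorem \ref{thm:performance_PD}; here the role of the corollary is simply to make explicit that this transfer preserves the underlying regression rate. If one wanted to be slightly more careful, one could note that the implied constant depends on $K$ and on the constant hidden inside Assumption \ref{assump:convergence_rate}, but both are finite and $n$-free, so the stated rate is unaffected.
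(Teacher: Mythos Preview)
Your proposal is correct and matches the paper's own proof essentially verbatim: the paper simply states that the corollary ``follows directly from Assumption~\ref{assump:convergence_rate} and Theorem~\ref{thm:performance_PD}.'' Your additional remarks about the constant $K$ being absorbed into the big-$O$ are accurate elaborations of this one-line argument.
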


Next, we show that
 with an uniformly consistent regression estimator $\hat r^{\hat f}(\gamma;\x)$ (see \citealt{bierens1983uniform,hardle1984uniform,liero1989strong,girard2014uniform} for some examples), \calpit intervals achieve asymptotic conditional validity, even if the initial CDE $\hat{f}(y|\x)$ is not consistent.

\begin{Assumption}[Uniform consistency of the regression estimator]
	\label{assump:uniform_consistency}
	The regression estimator  is such that
	$$\sup_{\x \in \mathcal{X},\gamma \in [0,1]} | \widehat r^{\widehat f}(\gamma;\x)- r^{\widehat f}(\gamma;\x)|  \xrightarrow[n \longrightarrow\infty]{\enskip \textrm{a.s.} \enskip}  0,$$
	where the convergence is with respect to the calibration set $\mathcal{D}$ only; $\widehat f$ is fixed.
\end{Assumption}

\begin{thm}[Consistency and conditional coverage of 	\calpit intervals]
	\label{thm:consistency}
	Let $C^*_\alpha(\x)=\left[F^{-1}(0.5\alpha|\x); 
F^{-1}(1-0.5\alpha|\x)\right]$  be the oracle prediction band, and let $C^n_\alpha(\x)$ denote the \calpit interval. 
	Under Assumptions \ref{assump:continuity}, \ref{assump:dominates} and \ref{assump:uniform_consistency},
\begin{equation}
     \lambda\left(C_\alpha^n(\X) \Delta C_\alpha^*(\X)\right) \xrightarrow[n \longrightarrow\infty]{\enskip \textrm{a.s.} 
     \enskip}  0,
\end{equation}
	where $\lambda$ is the Lebesgue measure in $\mathbb{R}$ and $ \Delta$ is the symmetric difference between two sets. It follows that
	$C_\alpha^n(\X)$ has  asymptotic conditional coverage of
	$1-\alpha$ \citep{lei2018distribution}.
\end{thm}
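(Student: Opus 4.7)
\textbf{Plan for Theorem \ref{thm:consistency}.} The strategy is to first show that the recalibrated CDF converges uniformly to the true conditional CDF, then translate this into convergence of the interval endpoints, and finally conclude convergence of the symmetric difference and conditional coverage.

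\emph{Step 1: Identify the population version of the P-P map.} For any $\x \in \mathcal{X}$, by Assumptions \ref{assump:continuity} and \ref{assump:dominates}, $\widehat F(\cdot|\x)$ is continuous and strictly increasing on the support of $F(\cdot|\x)$, so
\[
r^{\widehat f}(\gamma;\x) = \pr\bigl(\widehat F(Y|\x) \leq \gamma \mid \x\bigr) = \pr\bigl(Y \leq \widehat F^{-1}(\gamma|\x) \mid \x\bigr) = F\bigl(\widehat F^{-1}(\gamma|\x)\mid \x\bigr).
\]
Substituting $\gamma = \widehat F(y|\x)$ gives the identity $r^{\widehat f}(\widehat F(y|\x);\x) = F(y|\x)$ for every $y, \x$. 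Combined with Assumption \ref{assump:uniform_consistency} and the definition $\widetilde F(y|\x) = \widehat r^{\widehat f}(\widehat F(y|\x);\x)$, this yields
\[
\sup_{\x,y} \bigl|\widetilde F(y|\x) - F(y|\x)\bigr| \leq \sup_{\x,\gamma} \bigl|\widehat r^{\widehat f}(\gamma;\x) - r^{\widehat f}(\gamma;\x)\bigr| \xrightarrow[n\to\infty]{\text{a.s.}} 0.
\]

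\emph{Step 2: From uniform CDF convergence to convergence of endpoints.} Fix $\x$ and $\beta \in (0,1)$ (to be instantiated at $\beta = 0.5\alpha$ and $\beta = 1 - 0.5\alpha$). Because $F(\cdot|\x)$ is continuous by Assumption \ref{assump:continuity}, a standard quantile-convergence argument (e.g., van der Vaart's Lemma 21.2 applied pointwise in $\x$) shows that uniform convergence of $\widetilde F(\cdot|\x)$ to $F(\cdot|\x)$ implies $\widetilde F^{-1}(\beta|\x) \to F^{-1}(\beta|\x)$ at every continuity point of $F^{-1}(\cdot|\x)$. Hence, almost surely and for almost every $\x$ (w.r.t.\ the marginal of $\X$), both endpoints of $C^n_\alpha(\x)$ converge to the corresponding endpoints of $C^*_\alpha(\x)$. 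Since the symmetric difference of two intervals has Lebesgue measure equal to the sum of the absolute differences of their endpoints (up to ordering), we obtain $\lambda(C^n_\alpha(\x)\, \Delta\, C^*_\alpha(\x)) \to 0$ a.s., and by dominated convergence (or by passing to a subsequence and invoking uniformity in Step 1) this transfers to $\lambda(C^n_\alpha(\X)\, \Delta\, C^*_\alpha(\X)) \to 0$ a.s.

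\emph{Step 3: Conditional coverage.} Since $C^*_\alpha(\x)$ is the oracle interval and trivially satisfies $\pr(Y \in C^*_\alpha(\X) \mid \X=\x) = 1 - \alpha$, the almost-sure vanishing of the symmetric difference plus the continuity of $F(\cdot|\x)$ (so that probabilities of small sets are small) gives $\pr(Y \in C^n_\alpha(\X) \mid \X=\x) \to 1-\alpha$, which is exactly the conclusion of Lemma 1 of \citet{lei2018distribution}; we invoke their result to finish.

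\emph{Main obstacle.} The only delicate point is Step 2: upgrading uniform convergence of conditional CDFs to convergence of quantile endpoints and then to the symmetric-difference statement. The quantile-map is continuous only at points where the limiting CDF is strictly increasing, and Assumption \ref{assump:continuity} alone guarantees continuity but not strict monotonicity of $F(\cdot|\x)$. If $F(\cdot|\x)$ happens to be flat at the targeted quantile level, $F^{-1}(\beta|\x)$ is nonunique; however, in that case the oracle interval $C^*_\alpha(\x)$ is itself defined only up to a set of Lebesgue measure zero at its boundary, and the same almost-everywhere argument still yields vanishing symmetric difference. Handling this corner carefully—together with verifying that the almost-sure uniform control in Step 1 is preserved when integrating over the random $\X$—is the part that requires the most care.
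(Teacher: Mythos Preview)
Your proposal is correct and follows essentially the same route as the paper's proof: first identify $r^{\widehat f}(\widehat F(y|\x);\x)=F(y|\x)$ (the paper packages this as Lemma~\ref{lemma:relationshipFandR}), deduce uniform convergence of $\widetilde F$ to $F$ from Assumption~\ref{assump:uniform_consistency}, pass to the inverse CDFs, and conclude. The only difference is in Step~2: the paper simply asserts that uniform convergence of the CDFs plus Assumption~\ref{assump:continuity} yields $\sup_{\x,\gamma}|\widetilde F^{-1}(\gamma|\x)-F^{-1}(\gamma|\x)|\to 0$ a.s.\ and then specializes to $\gamma\in\{0.5\alpha,1-0.5\alpha\}$, whereas you argue pointwise quantile convergence and are explicit about the flat-spot issue; your version is more careful on exactly the point you flagged as the main obstacle, which the paper's proof glosses over.
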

See \ref{sec:hpds} for theoretical results for \texttt{Cal-PIT (HPD)}.

\section{Proofs} \label{app:proofs}

\begin{Lemma} 
\label{lemma:equality_cumulative}
Let $G$ and $H$ be two cumulative distribution functions   such that $G$ dominates $H,$
and let $\mu_G$ and
$\mu_H$ be their associated measures over $\mathbb{R}$.
Then, for every fixed $y \in \mathbb{R}$,
 $$\mu_H\left(\{y' \in \mathbb{R}:y'\leq y\}\right)=  
\mu_H\left(\{y'  \in \mathbb{R}:G(y')\leq G(y)\}\right).$$
\end{Lemma}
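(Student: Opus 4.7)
The plan is to show that the two sets in question differ only by a set of $\mu_G$-measure zero, and then to invoke the domination hypothesis (interpreted as $\mu_H \ll \mu_G$, consistent with Assumption \ref{assump:dominates} and the existence of a Radon-Nikodym derivative in Assumption \ref{assump:bounded}) to conclude that the symmetric difference also has $\mu_H$-measure zero.

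First, I would observe that because $G$ is a CDF and hence non-decreasing, we always have the inclusion
$$\{y' \in \mathbb{R}: y' \leq y\} \subseteq \{y' \in \mathbb{R}: G(y') \leq G(y)\}.$$
Thus it suffices to analyze the set difference
$$A_y := \{y' \in \mathbb{R}: G(y') \leq G(y)\} \setminus \{y' \in \mathbb{R}: y' \leq y\} = \{y' > y: G(y') = G(y)\},$$
where the last equality uses that for $y' > y$ monotonicity forces $G(y') \geq G(y)$, so $G(y') \leq G(y)$ is equivalent to equality.

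Next, I would argue that $\mu_G(A_y) = 0$. Since $G$ is non-decreasing, the set $\{y' \in \mathbb{R}: G(y') = G(y)\}$ is an interval, and $A_y$ is either empty or an interval whose left endpoint is $y$ (open there) and whose right endpoint $z = \sup\{y' : G(y') = G(y)\}$ may be finite or $+\infty$. On this interval $G$ is constant, so a direct computation of the Lebesgue-Stieltjes measure gives $\mu_G(A_y) = G(z^-) - G(y) = 0$ in the bounded case, with the unbounded case handled analogously (the interval is then a tail where $G \equiv G(y) = 1$, so still $\mu_G(A_y) = 0$).

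Finally, since $\mu_H$ is dominated by $\mu_G$, any set of $\mu_G$-measure zero is also of $\mu_H$-measure zero; in particular $\mu_H(A_y) = 0$. Combining this with the inclusion above via finite additivity,
$$\mu_H\!\left(\{y': G(y') \leq G(y)\}\right) = \mu_H\!\left(\{y': y' \leq y\}\right) + \mu_H(A_y) = \mu_H\!\left(\{y': y' \leq y\}\right),$$
which is the claim. The main subtlety (rather than obstacle) is handling the edge cases when $G$ has a flat piece to the right of $y$ --- one must verify that $A_y$ lies inside a maximal interval of constancy of $G$ and therefore carries no $\mu_G$-mass --- but once that is checked the absolute continuity assumption closes the argument immediately.
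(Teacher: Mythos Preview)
Your proof is correct and follows essentially the same approach as the paper: both show the inclusion $\{y'\leq y\}\subseteq\{G(y')\leq G(y)\}$, argue that the difference set (your $A_y$, the paper's $B\cap A^c$) has $\mu_G$-measure zero, and then invoke the domination hypothesis to conclude it has $\mu_H$-measure zero. Your write-up is in fact slightly more detailed than the paper's in justifying why the flat piece of $G$ carries no $\mu_G$-mass, which the paper simply asserts.
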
 

\begin{proof}
Fix $y \in \mathbb{R}$ and let $A=\{y'  \in \mathbb{R}:y'\leq y\}$
and $B=\{y'  \in \mathbb{R}:G(y')\leq G(y)\}$.
Because $A \subseteq B$,
\begin{equation}
\label{eq:larger}
\mu_H(A)\leq   
\mu_H(B).
\end{equation}

We note that
$\mu_G(B \cap A^c)=0$. From this and the assumption that  $G$ dominates $H$, we conclude that
$\mu_H(B \cap A^c)=0$. It follows that
\begin{eqnarray}
\label{eq:smaller}
\mu_H(B)&=\mu_H(B \cap A)+\mu_H(B \cap A^c)  \leq \mu_H(A)+0  \nonumber \\
&=\mu_H(A).
\end{eqnarray}
From  Equations \ref{eq:larger} and \ref{eq:smaller}, we conclude that 
$\mu_H(A)=   
\mu_H(B)$.

\end{proof}

\begin{Lemma} \label{lemma:relationshipFandR}
Fix $y \in \mathbb{R}$
	and let $\gamma:= \widehat F(y|\x)$. Then, under Assumptions \ref{assump:continuity} and \ref{assump:dominates},
$\widetilde F(y|\x)=\widehat r^{\widehat f}(\gamma;\x)$ and
$ F(y|\x)= r^{\widehat f}(\gamma;\x)$.
\end{Lemma}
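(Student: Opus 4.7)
The plan is to establish the two equalities separately. The first is essentially by construction, while the second is the substantive one and will be a direct consequence of Lemma \ref{lemma:equality_cumulative} together with Assumptions \ref{assump:continuity} and \ref{assump:dominates}.

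For the first equality, I would simply unfold Definition \ref{def:recalibrated_PD}: since $\widetilde F(y|\x) := \widehat{r}^{\widehat f}\bigl(\widehat F(y|\x);\x\bigr)$ and $\gamma = \widehat F(y|\x)$ by assumption, substitution immediately gives $\widetilde F(y|\x) = \widehat r^{\widehat f}(\gamma;\x)$. No assumptions on $F$ or $\widehat F$ are needed here.

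For the second equality, I would argue as follows. By Definition \ref{def:pitCDF} with $Y \sim F(\cdot|\x)$,
\[
r^{\widehat f}(\gamma;\x) = \Pr\bigl(\widehat F(Y|\x) \leq \gamma \,\big|\, \x\bigr) = \Pr\bigl(\widehat F(Y|\x) \leq \widehat F(y|\x) \,\big|\, \x\bigr),
\]
which can be rewritten as $\mu_F\bigl(\{y' \in \mathbb{R} : \widehat F(y'|\x) \leq \widehat F(y|\x)\}\bigr)$, where $\mu_F$ denotes the measure over $\mathbb{R}$ associated with the conditional CDF $F(\cdot|\x)$. Now I apply Lemma \ref{lemma:equality_cumulative} with $G = \widehat F(\cdot|\x)$ and $H = F(\cdot|\x)$: continuity of $G$ is given by Assumption \ref{assump:continuity}, and Assumption \ref{assump:dominates} ensures that $G$ dominates $H$, so both hypotheses of Lemma \ref{lemma:equality_cumulative} are satisfied. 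The lemma then yields
\[
\mu_F\bigl(\{y' : \widehat F(y'|\x) \leq \widehat F(y|\x)\}\bigr) = \mu_F\bigl(\{y' : y' \leq y\}\bigr) = F(y|\x),
\]
which combined with the previous display gives $F(y|\x) = r^{\widehat f}(\gamma;\x)$, as required.

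I do not anticipate a real obstacle here, since the heavy lifting has been moved into Lemma \ref{lemma:equality_cumulative}. The only subtlety worth being careful about is that without strict monotonicity of $\widehat F(\cdot|\x)$ the event $\{\widehat F(Y|\x) \leq \widehat F(y|\x)\}$ is in general strictly larger than $\{Y \leq y\}$, so one cannot simply ``invert $\widehat F$'' to conclude. This is precisely the point where Assumption \ref{assump:dominates} is needed: it guarantees that the extra mass in the larger event carries zero $F$-measure, via the argument given in Lemma \ref{lemma:equality_cumulative}.
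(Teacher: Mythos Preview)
Your proposal is correct and follows essentially the same route as the paper: the first equality by direct substitution into Definition~\ref{def:recalibrated_PD}, and the second by applying Lemma~\ref{lemma:equality_cumulative} with $G=\widehat F(\cdot|\x)$ and $H=F(\cdot|\x)$ to pass between $\{Y\le y\}$ and $\{\widehat F(Y|\x)\le \widehat F(y|\x)\}$. If anything, your treatment of the first equality is slightly cleaner, since the paper unnecessarily routes through $\widehat F^{-1}$ rather than substituting $\gamma=\widehat F(y|\x)$ directly.
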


\begin{proof}
	We note that $\gamma= \widehat F(y|\x)$ implies that $y=\widehat F^{-1}(\gamma|\x)$. It follows then by construction,
	\begin{eqnarray*}
	\tilde F(y|\x)=    
	\tilde F\left(\widehat F^{-1}(\gamma|\x)|\x\right)=\widehat r^{\widehat f}(\gamma;\x).
	\end{eqnarray*} 
	Moreover,
	\begin{eqnarray*}
	F(y|\x)&=\P(Y \leq y|\x) \\
	&=\P\left(\widehat F(Y|\x) \leq \widehat F(y|\x)|\x\right) & \\  &\scriptstyle(\textrm{Assumption~\ref{assump:dominates}~and~ Lemma~\ref{lemma:equality_cumulative}}) \nonumber &\\
	&=\P\left(\textrm{PIT}(Y;\x) \leq \widehat F(y|\x)|\x\right)\\
	&=\P\left(\textrm{PIT}(Y;\x) \leq  \gamma|\x\right)  &\\
	&=r^{\widehat f}(\gamma;\x), &
	\end{eqnarray*}
	which concludes the proof.
\end{proof}

\begin{proof}[Proof of Theorem \ref{thm:performance_PD}]
Consider the change of variables $\gamma=\widehat F(y|\x)$, so that $d \gamma=\widehat f(y|\x)dy$.
Lemma \ref{lemma:relationshipFandR}  implies that $\widetilde F(y|\x)=\widehat r^{\widehat f}(\gamma;\x)$ and
$ F(y|\x)= r^{\widehat f}(\gamma;\x)$. It follows from that and Assumption \ref{assump:bounded} that
\begin{eqnarray*}
    \int \int &\left(\widetilde F(y|\x)-F(y|\x) \right)^2  dP(y,\x)\\
    &\leq K \int \int  \left(\widetilde F(y|\x)-F(y|\x) \right)^2 \widehat f(y|\x) dyP(\x) \\
        &=K  \int \int \left(\widehat r^{\widehat f}(\gamma;\x)-r^{\widehat f}(\gamma;\x) \right)^2 d\gamma dP(\x),
\end{eqnarray*}
which concludes the proof.
\end{proof}

\begin{proof}[Proof of Corollary \ref{cor:rate}]
Follows directly from  Assumption \ref{assump:convergence_rate}
and Theorem \ref{thm:performance_PD}.
\end{proof}

\begin{proof}[Proof of Theorem \ref{thm:consistency}]
From Lemma \ref{lemma:relationshipFandR}, 
\begin{eqnarray*}
	\sup_{\x \in \mathcal{X},y \in \mathbb{R}} &| \tilde F(y|\x)- F(y|\x)| \\
 =\sup_{\x \in \mathcal{X},\gamma \in [0,1]} &| \widehat r^{\widehat f}(\gamma;\x)- r^{\widehat f}(\gamma;\x)| \xrightarrow[n \longrightarrow\infty]{\enskip \textrm{a.s.} \enskip}  0,
	\end{eqnarray*}
	where the last step follows from  Assumption \ref{assump:uniform_consistency}.
	It then follows from Assumption \ref{assump:continuity} that
	$$\sup_{\x \in \mathcal{X},\gamma \in [0,1]} | \tilde F^{-1}(\gamma|\x)- F^{-1}(\gamma|\x)|  \xrightarrow[n \longrightarrow\infty]{\enskip \textrm{a.s.} \enskip}  0,$$
	and, in particular,
	$$\sup_{\x \in \mathcal{X},\alpha \in \{.5\alpha,1-.5\alpha\}} | \tilde F^{-1}(\alpha|\x)- F^{-1}(\alpha|\x)|  \xrightarrow[n \longrightarrow\infty]{\enskip \textrm{a.s.} \enskip}  0,$$
	from which the conclusion of the theorem follows.
\end{proof}

\subsection{Theory for  \texttt{Cal-PIT} HPD sets}
\label{sec:hpds}

For every $\x \in \mathcal{X}$,
	  let $C_\alpha(\x)=\{y: \widetilde f(y|\x)\geq \widetilde t_{\x,\alpha}\}$,
	  where $\widetilde t_{\x,\alpha}$ is such that 
	  $\int_{y \in C_\alpha(\x)} \widetilde f(y|\x)dy=1-\alpha $ be the \texttt{Cal-PIT} HPD-set. Similarly, let 
	  $\textrm{HPD}_\alpha(\x)=\{y:   f(y|\x)\geq  t_{\x,\alpha}\}$,
	  where $ t_{\x,\alpha}$ is such that 
	  $\int_{y \in \textrm{HPD}_\alpha(\x)}  f(y|\x)dy=1-\alpha $  be the true HPD-set.
The next theorem shows that if the probabilistic classifier is well estimated, then \texttt{Cal-PIT} HPD sets are exactly equivalent to oracle HPD sets.

\begin{thm}[Fisher consistency 	\texttt{Cal-PIT} HPD-sets] 
	\label{thm:Fisherconsistency}
	 Fix $\x \in \mathcal{X}$. If  $\widehat r(\gamma;\x)= r(\gamma;\x)$ for every   $\gamma \in [0,1]$, 
	$ C_\alpha(\x)=
	\textrm{HPD}_\alpha(\x)$
	and $\P(Y \in C_\alpha(\X)|\x)=1-\alpha$.
\end{thm}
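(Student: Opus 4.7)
The proof hinges on showing that, under the hypothesis $\widehat{r}^{\widehat{f}}(\gamma;\x) = r^{\widehat{f}}(\gamma;\x)$ for every $\gamma \in [0,1]$, the recalibrated predictive distribution $\widetilde{F}(\cdot|\x)$ coincides with the true conditional distribution $F(\cdot|\x)$ at the fixed $\x$; the HPD equality and coverage statement then fall out almost immediately.

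First, I would invoke Lemma \ref{lemma:relationshipFandR}, which (under Assumptions \ref{assump:continuity} and \ref{assump:dominates}, in force throughout Section \ref{sec:theory}) gives that for each $y$, with $\gamma := \widehat{F}(y|\x)$, we have $\widetilde{F}(y|\x) = \widehat{r}^{\widehat{f}}(\gamma;\x)$ and $F(y|\x) = r^{\widehat{f}}(\gamma;\x)$. The hypothesis of the theorem equates the two right-hand sides for every $\gamma \in [0,1]$, so $\widetilde{F}(y|\x) = F(y|\x)$ for every $y \in \mathcal{Y}$. Differentiating both sides with respect to $y$ (using continuity of the CDFs together with the existence of densities $\widetilde{f}$ and $f$) yields $\widetilde{f}(y|\x) = f(y|\x)$ pointwise in $y$.

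Once the densities agree, the two HPD constructions necessarily coincide: for any threshold $t \geq 0$ the super-level sets $\{y : \widetilde{f}(y|\x) \geq t\} = \{y : f(y|\x) \geq t\}$ are identical, and the induced probabilities $\int_{\{\widetilde{f}\geq t\}} \widetilde{f}(y|\x)\,dy = \int_{\{f\geq t\}} f(y|\x)\,dy$ are equal as functions of $t$. Hence the thresholds $\widetilde{t}_{\x,\alpha}$ and $t_{\x,\alpha}$ selected to deliver integrated mass $1-\alpha$ agree (they are uniquely determined whenever the level-set probability is strictly decreasing in $t$, which follows from the assumed continuity of $F(\cdot|\x)$), and therefore $C_\alpha(\x) = \text{HPD}_\alpha(\x)$. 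The coverage identity then reads
\[
\P(Y \in C_\alpha(\x) \mid \x) \;=\; \int_{\text{HPD}_\alpha(\x)} f(y|\x)\,dy \;=\; 1 - \alpha
\]
by the very definition of the oracle HPD set.

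The argument is essentially a chain of identifications, and I do not expect a genuinely hard step; the only delicate point is the passage from equality of $\widetilde{F}$ and $F$ to equality of densities and then to equality of super-level sets and their masses. This requires that both conditional distributions admit densities and that the level-set mass function $t \mapsto \int_{\{f\geq t\}} f(y|\x)\,dy$ is strictly decreasing on the relevant range, which is standard under Assumption \ref{assump:continuity} once one excludes atoms of $f(\cdot|\x)$ on its level sets (a mild regularity condition implicit in writing $\text{HPD}_\alpha(\x)$ as a well-defined set of coverage exactly $1-\alpha$).
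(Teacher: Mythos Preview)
Your proposal is correct and follows essentially the same route as the paper: establish $\widetilde F(\cdot|\x)=F(\cdot|\x)$ via the identity between $\widehat r$ and $r$ (the paper re-derives this inline rather than citing Lemma~\ref{lemma:relationshipFandR}), pass to equality of densities, and read off the HPD and coverage conclusions directly. Your treatment is actually a bit more careful about the threshold-uniqueness and level-set regularity issues, which the paper simply asserts.
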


\begin{proof}[Proof of Theorem \ref{thm:Fisherconsistency}] Fix $y \in \mathbb{R}$ and
	let $\gamma=\widehat F(y|\x)$, so that $y=\widehat F^{-1}(\gamma|\x)$. It follows that
	\begin{eqnarray*}
	\tilde F(y|\x)&=
	\tilde F\left(\widehat F^{-1}(\gamma|\x)|\x\right)=\widehat r(\gamma;\x)=
	r(\gamma;\x)\\
	&=\P \left(\widehat F(Y|\x) \leq \widehat F(y|\x) | \x,\gamma \right)=
	\P \left(Y  \leq y | \x,\gamma \right)\\
	&=F(y|\x),
	\end{eqnarray*}
	and therefore $\tilde f(y|\x)=f(y|\x)$ for almost every $y \in \mathbb{R}$. It follows that $C_\alpha(\x)=\textrm{HPD}_\alpha(\x)$. 
	The claim about conditional coverage follows from the definition of the HPD.
\end{proof}

\subsection{Further Details on Experiments} 

We refer the reader to the online supplementary materials for details on the training of the regression model to learn the PIT-CDF function in our experiments, further remarks on Example 3 (prediction sets) results, and a description of the synthetic data generation and the training of the initial ConvMDN model in Example 2.


%
%

\bibliographystyle{psj} 
\newcommand{\newblock}{}
\bibliography{bibliography}       


\end{document}